\definecolor{lightgreen}{rgb}{0.8,1,0.8}
\newcommand{\cmark}{\cellcolor{lightgreen}}%
\newcommand{\explicitGraph}[0]{G}
\newcommand{\vertexSet}[0]{V}
\newcommand{\edgeSet}[0]{E}
\newcommand{\vertex}[0]{v}
\newcommand{\edge}[0]{e}
\newcommand{\start}[0]{v_s}
\newcommand{\goal}[0]{v_g}
\newcommand{\cost}[0]{c}
\newcommand{\length}[0]{\ell}
\renewcommand{\path}[0]{\xi}
\newcommand{\pathSet}[0]{\Xi}
\newcommand{\world}[0]{\phi}
\newcommand{\evalEdges}[0]{E_{\mathrm{eval}}}
\newcommand{\edgesValid}[0]{E_{\mathrm{val}}}
\newcommand{\edgesInvalid}[0]{E_{\mathrm{inv}}}
\newcommand{\vectorp}[0]{\mathbf{p}}
\newcommand{\stateSpace}{\mathcal{S}}
\newcommand{\actionSpace}{\mathcal{A}}
\newcommand{\transFn}{\mathrm{T}}
\newcommand{\rewardFn}{\mathrm{R}}
\newcommand{\state}{s}
\newcommand{\action}{a}
\newcommand{\discount}{\gamma}
\newcommand{\stateAbs}{\mathcal{G}}
\newcommand{\valueFnPolicy}[1]{V^{#1}}
\newcommand{\QFn}[1]{Q^{#1}}
\newcommand{\dataset}{\mathcal{D}}
\newcommand{\mixParam}{\beta}
\newcommand{\policyLEARN}[0]{\hat{\pi}}
\newcommand{\policyOR}[0]{\pi_{\mathrm{OR}}}
\newcommand{\policyAOR}[0]{\pi_{\mathrm{AOR}}}
\newcommand{\policyORBel}[0]{\tilde{\pi}_{\mathrm{OR}}}
\newcommand{\policyAORBel}[0]{\tilde{\pi}_{\mathrm{AOR}}}
\newcommand{\policy}[0]{\pi}
\newcommand{\policySpace}[0]{\Pi}
\newcommand{\policyOpt}[0]{\policy^*}
\newcommand{\policyLearn}[0]{\hat{\policy}}
\newcommand{\policyMix}[0]{\policy_\mathrm{mix}}
\newcommand{\policyOracle}[0]{\policy_\mathrm{OR}}
\newcommand{\policyRoll}[0]{\pi_{\mathrm{roll}}}
\newcommand{\hyp}[0]{h}
\newcommand{\hypSet}[0]{\mathcal{H}}
\newcommand{\numHyp}[0]{n}
\newcommand{\test}[0]{t}
\newcommand{\testSet}[0]{\mathcal{T}}
\newcommand{\numTest}[0]{l}
\newcommand{\outcome}[0]{x}
\newcommand{\region}[0]{\mathcal{R}}
\newcommand{\numRegion}[0]{m}
\newcommand{\versionSpace}[0]{\hypSet}
\newcommand{\selTestSet}[0]{\mathcal{A}}
\newcommand{\outcomeSet}[1]{\mathbf{\outcome}_{#1}}
\newcommand{\obsOutcome}[0]{\outcomeSet{\selTestSet}}
\newcommand{\obsOutcomeFunc}[2]{\obsOutcome \left( #1, #2\right)}
\newcommand{\outcomeTest}[1]{\outcome_{#1}}
\newcommand{\obsOutcomeAdd}[1]{\outcomeSet{\selTestSet \cup \set{#1}}}
\newcommand{\graphDRD}[0]{\mathcal{G}}
\newcommand{\vertexSetDRD}[0]{\mathcal{V}}
\newcommand{\edgeSetDRD}[0]{\mathcal{E}}
\newcommand{\weight}[0]{w}
\newcommand{\fec}[1]{f_{\mathrm{EC}}\left(#1\right)}
\newcommand{\gain}[2]{\Delta \left( #1 \;|\; #2 \right)}
\newcommand{\policyEC}[0]{\pi_{\mathrm{EC}}}
\DeclareMathOperator*{\argmin}{arg\,min}
\DeclareMathOperator*{\argmax}{arg\,max}
\newcommand{\maxprob}[1]{\underset{#1}{\max}}
\newcommand{\minprob}[1]{\underset{#1}{\min}}
\newcommand{\argmaxprob}[1]{\underset{#1}{\argmax}}
\newcommand{\argminprob}[1]{\underset{#1}{\argmin}}
\newcommand{\norm}[2]{\left|\left| #1 \right|\right|_{#2}}
\newcommand{\abs}[1]{\left|#1\right|}
\newcommand{\card}[1]{\left|#1\right|}
\DeclareMathOperator*{\suchthat}{\;\; \mbox{s.t.} \;\;}
\newcommand{\expect}[2]{\mathbb{E}_{#1}\left[#2\right]}
\newcommand{\real}[0]{\mathbb{R}}
\newcommand{\bbm}{\begin{bmatrix}}
\newcommand{\ebm}{\end{bmatrix}}
\newcommand{\pair}[2]{\left( #1, #2\right)}
\newcommand{\set}[1]{\left\lbrace #1\right\rbrace}
\newcommand{\seq}[2]{\left(#1_{1}, #1_{2}, \ldots, #1_{#2}\right)}
\newcommand{\setst}[2]{\left\lbrace #1\;\middle|\;#2\right\rbrace}
\newcommand{\Ind}[0]{\mathbb{I}}
\newcommand{\daggerAlg}[0]{\textsc{DAgger}}
\newcommand{\supervisedAlg}[0]{\textsc{Supervised}}
\newcommand{\algName}[0]{\textsc{StrOLL}\xspace}
\newcommand{\algFullName}[0]{\textbf{S}earch \textbf{t}h\textbf{r}ough \textbf{O}racle \textbf{L}earning and \textbf{L}aziness \xspace}
\newcommand{\algHeuristic}[0]{\textsc{StrOLL-R}\xspace}
\newcommand{\bisect}[0]{\textsc{BiSECt}\xspace}
\newcommand{\direct}[0]{\textsc{DiRECt}\xspace}
\newcommand{\ecsq}[0]{EC\textsuperscript{2}\xspace}
\newcommand{\lazysp}[0]{\textsc{LazySP}\xspace}
\newcommand{\selector}[0]{\textsc{Selector}\xspace}
\newcommand{\selectorOracle}[0]{\textsc{Oracle}\xspace}
\newcommand{\selectorRandom}[0]{\textsc{Random}\xspace}
\newcommand{\selectorForward}[0]{\textsc{Forward}\xspace}
\newcommand{\selectorBackward}[0]{\textsc{Backward}\xspace}
\newcommand{\selectorAlternate}[0]{\textsc{Alternate}\xspace}
\newcommand{\selectorFailFast}[0]{\textsc{FailFast}\xspace}
\newcommand{\selectorPostFailFast}[0]{\textsc{PostFailFast}\xspace}
\newcommand{\featPost}[0]{\textsc{Posterior}\xspace}
\newcommand{\featPrior}[0]{\textsc{Prior}\xspace}
\newcommand{\featIndex}[0]{\textsc{Location}\xspace}
\newcommand{\featDeltaLength}[0]{$\Delta$-\textsc{Length}\xspace}
\newcommand{\featDeltaEval}[0]{$\Delta$-\textsc{Eval}\xspace}
\newcommand{\featPDL}[0]{P\featDeltaLength}
\newcommand{\selectorPDL}[0]{P\featDeltaLength\xspace}
\newcommand{\eref}[1]{(\ref{#1})}
\newcommand{\sref}[1]{Section~\ref{#1}}
\newcommand{\figref}[1]{Fig.~\ref{#1}}
\newcommand{\algoref}[1]{Algorithm~\ref{#1}}
\newcommand{\algolineref}[1]{Line~\ref{#1}}
\newtheoremstyle{hypstyle}
{3pt} 
{3pt} 
{\itshape} 
{} 
{\bfseries} 
{.} 
{.5em} 
{} 
\theoremstyle{hypstyle} 
\newtheorem{observation}{O}
\newtheorem{ques}{Q}
\newcommand{\xxnote}[3]{}
  \renewcommand{\xxnote}[3]{\color{#2}{#1: #3}}
\newcommand{\fullFigGap}[0]{\vspace{-1.5\baselineskip}} 
\begin{document}

\title{Leveraging Experience in Lazy Search\thanks{This work was (partially) funded by the National Institute of Health R01 (\#R01EB019335), National Science Foundation CPS (\#1544797), National Science Foundation NRI (\#1637748), National Science Foundation CAREER (\#1750483), the Office of Naval Research, the RCTA, Amazon, and Honda Research Institute USA.}
}

\author{Mohak Bhardwaj \and
Sanjiban Choudhury \and
Byron Boots \and 
Siddhartha Srinivasa \\
}
\institute{Mohak Bhardwaj \at
           University of Washington \\
           Bill \& Melinda Gates Center \\
           3800 E Stevens Way NE \\
           Seattle, WA 98195 \\
           Tel.: +1(717) 788-9103\\
           Fax: +1(206) 543-2969\\
           \email{mohakb@uw.edu}           
           \and
           Sanjiban Choudhury \at
           Aurora Innovation, Inc. \\
           113 47th St \\
           Pittsburgh, PA 15201
           \and
           Byron Boots \at
           University of Washington \\
           Bill \& Melinda Gates Center \\
           3800 E Stevens Way NE \\
           Seattle, WA 98195
           \and 
           Siddhartha Srinivasa \at
           University of Washington \\
           Bill \& Melinda Gates Center \\
           3800 E Stevens Way NE \\
           Seattle, WA 98195
}

\maketitle
\vspace{-4mm}
\begin{abstract}
Lazy graph search algorithms are efficient at solving motion planning problems where edge evaluation is the computational bottleneck. 
These algorithms work by lazily computing the shortest potentially feasible path, evaluating edges along that path, and repeating until a feasible path is found. 
The order in which edges are selected is critical to minimizing the total number of edge evaluations: 
a good edge selector chooses edges that are not only likely to be invalid, but also eliminates future paths from consideration.
We wish to learn such a selector by leveraging prior experience. 
We formulate this problem as a Markov Decision Process (MDP) on the state of the search problem. 
While solving this large MDP is generally intractable,
 we show that 
 we can compute oracular selectors that can solve the MDP during training.
With access to such oracles, we  use imitation learning to find effective policies. If new search problems are sufficiently similar to problems solved during training, the learned policy will choose a good edge evaluation ordering and solve the motion planning problem quickly. 
We evaluate our algorithms on a wide range of $2$D and $7$D problems and show that the learned selector outperforms baseline commonly used heuristics. 
We further provide a novel theoretical analysis of lazy search in a Bayesian framework as well as regret guarantees on our imitation learning based approach to motion planning.

\keywords{Imitation Learning \and Motion Planning \and Lazy Search}

\end{abstract}


\section{Introduction}
\label{sec:introduction}

In this paper, we explore algorithms that leverage past experience to find the shortest path on a graph while minimizing planning time.  
We focus on the domain of robot motion planning where the planning time is dominated by \emph{edge evaluation}~\citep{hauser2015lazy}.
Here the goal is to check the minimal number of edges, invalidating potential shortest paths along the way, until we discover the shortest feasible path -- this is the central tenet of lazy search~\citep{dellin2016unifying,bohlin2000path}. 
We propose to \emph{learn within this framework} which edges to evaluate (\figref{fig:intro}).

How should we leverage experience? Consider the ``Piano Mover's Problem''~\citep{schwartz1983piano} where the goal is to plan a path for a piano from one room in a house to another. Collision checking all possible motions of the piano can be quite time-consuming. Instead, what can we infer if we were given a database of houses and edge evaluations results?
\begin{enumerate}
	\item \emph{Check doors first} - these edges serve as bottlenecks for many paths which can be eliminated early if invalid.
	\item \emph{Prioritize narrow doors} - these edges are more likely to be invalid and can save checking other edges.
	\item \emph{Similar doors, similar outcomes} - these edges are correlated, checking one reveals information about others.
\end{enumerate}
Intuitively, we need to consider all past discoveries about edges to make a decision. While this has been explored in the Bayesian setting~\citep{choudhury2017active,choudhury2018bayesian}, we show that more generally the problem can be mapped to a Markov Decision Process (MDP).
However, the size of the MDP grows exponentially with the size of the graph. Even if we were to use approximate dynamic programming, we still need to explore an inordinate number of states to learn a reasonable policy.  

Interestingly, if we were to reveal the status of all the edges during training, we can conceive of a \emph{clairvoyant oracle}~\citep{choudhury2017data} that can select the optimal sequence of edges to invalidate. In fact, we show that the oracular selector is equivalent to \emph{set cover}, for which greedy approximations exist. By imitating clairvoyant oracles~\citep{choudhury2017data}, we can drastically cut down on exploration and focus learning on a small, relevant portion of the state space~\cite{sun2017deeply}. This leads to a key insight: use imitation learning to quickly bootstrap the selector to match oracular performance.  
%
We propose a new algorithm, \algName, that deploys an interactive imitation learning framework~\citep{ross2011reduction} to train the edge selector (\figref{fig:illustration_algorithm}). At every iteration, it samples a world (validity status for all edges) and executes the learner. At every timestep, it queries the clairvoyant oracle associated with the world to select an edge to evaluate. This can be viewed as a classification problem where the goal is to map features extracted from edges to the edge selected by the oracle. This datapoint is aggregated with past data, which is then used to update the learner. 

We also theoretically analyze the problem in the Bayesian setting to characterize lower bounds. We show that the problem can be mapped to an instance of Bayesian Active Learning, and hence is NP-Hard. However, this mapping allows us to harness the theory of adaptive submodularity to derive analytic, near-optimal policies for edge evaluation. To the best of our knowledge, this is the first such bound on lazy shortest path. 

In summary, our main contributions are:
\begin{enumerate}
	\item We map edge selection in lazy search to an MDP (\sref{sec:problem_formulation}) and solve it for small graphs (\sref{sec:challenge}).
	\item We show that larger MDPs, can be efficiently solved by imitating clairvoyant oracles (\sref{sec:approach}).
	\item We show that the learned policy  can outperform competitive baselines on a wide range of datasets (\sref{sec:experiments}).
	\item We also derive an analytic policy that achieves near Bayes-optimality, the first such bounds for lazy shortest path (\sref{sec:bayesian_sp}).
\end{enumerate}

This paper is an extension of~\citep{bhardwaj2019leveraging}. We introduce the following new content
\begin{enumerate}
	\item In \sref{sec:bayesian_sp}, we introduce the paradigm of Bayesian Lazy Shortest Path and present a theoretical analysis that maps it to the Bayesian Decision Region Determination (DRD) problem for which near-optimal policies can be derived.
	\item In \sref{sec:analysis}, we provide theoretical bounds on the performance \algName via a regret analysis that exploits a connection between imitation learning of clairvoyant oracles to hindsight optimization. We also show that in the special case of independent Bernoulli edges, the optimal edge selector lies within the policy class of \algName. 
	\item We provide additional experimental details for \algName and a qualitative comparison that sheds light on the efficacy of the learned selector versus uninformed heuristics commonly used in practice.
\end{enumerate}

\begin{figure}[!t]
\centering
\includegraphics[width=\columnwidth]{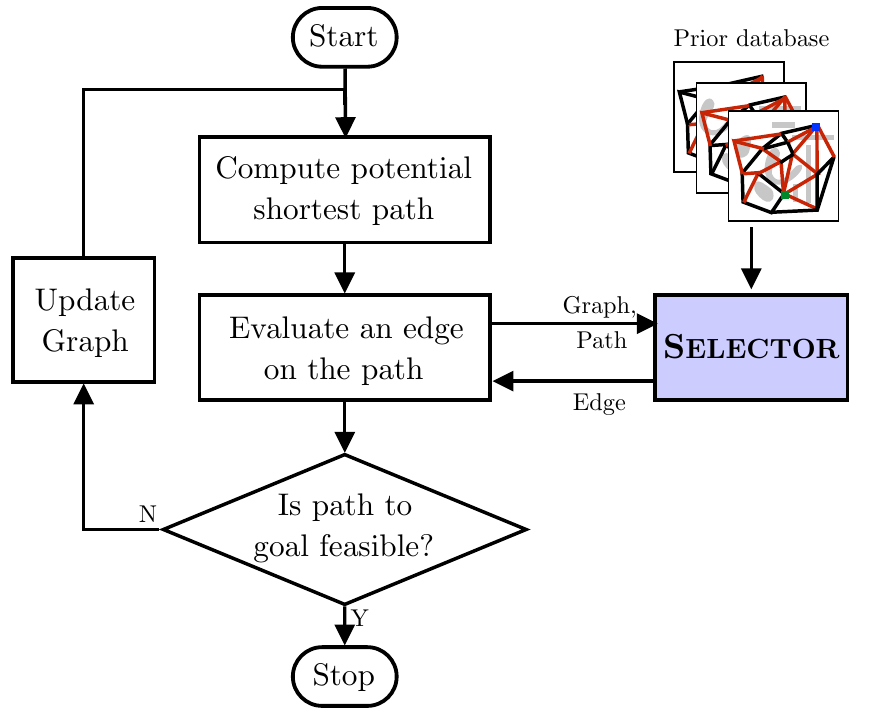}
\caption{The \lazysp~\citep{dellin2016unifying} framework. \lazysp iteratively computes the shortest path, queries a \selector for an edge on the path, evaluates it and updates the graph until a feasible path is found. The number of edges evaluated depends on the choice of \selector. We propose to train a \selector from prior data. \fullFigGap}
\label{fig:intro}
\end{figure}

\begin{figure*}[!t]
    \centering
    \includegraphics[width=\textwidth]{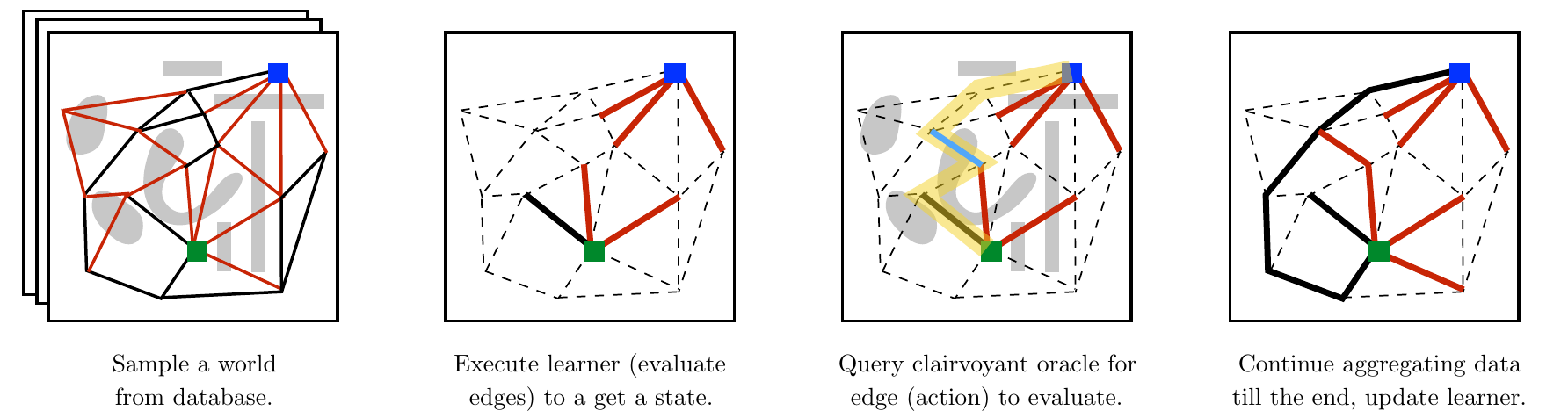}
    \caption{ Overview of \algName - a training procedure for a \selector to select edges to evaluate in the \lazysp framework. 
    In each training iteration, a world map $\world$ is sampled. The learner is executed upto a time step to get a state $s_t$ which is the set of edges evaluated and their outcomes. The learner has to decide which edge to evaluate on the current shortest path. It extracts features from every edge - we use a set of baseline heuristic values as features. The \selector asks a clairvoyant oracle selector (which has full knowledge of the world) which edge to evaluate. This is then added to a classification dataset and the learner is updated. This process is repeated over several iterations. \fullFigGap}
  \label{fig:illustration_algorithm}
\end{figure*}%

\section{Problem Formulation}
\label{sec:problem_formulation}

The overall objective is to design an algorithm that can solve the Shortest Path (SP) problem while minimizing the number of edges evaluated.

\subsection{The Shortest Path (SP) Problem}
\label{sec:problem_formulation:sp}

Let $\explicitGraph = \pair{\vertexSet}{\edgeSet}$ be an explicit graph where $\vertexSet$ denotes the set of vertices and $\edgeSet$ the set of edges. Given a start and goal vertex $\pair{\start}{\goal} \in \vertexSet$, a path $\path$ is represented as a sequence of vertices $\seq{\vertex}{l}$ such that $\vertex_1 = \start, \vertex_l = \goal, \forall i,~\pair{\vertex_i}{\vertex_{i+1}} \in \edgeSet$. 
We define a \emph{world} $\world: \edgeSet \rightarrow \{0,1\}$ as a mapping from edges to valid ($1$) or invalid ($0$). A path is said to be \emph{feasible} if all edges are valid, i.e. $\forall \edge \in \path, \world(\edge) = 1$. 
Let $\length: \edgeSet \rightarrow \real^+$ be the length of an edge. The length of a path is the sum of edge lengths, i.e. $\length(\path) = \sum_{\edge \in \path} \length(\edge)$. The objective of the SP problem is the find the shortest feasible path:
\begin{equation}
    \min_{\path} \; \length(\path) \suchthat{ \forall \edge\in\path, \world(\edge) = 1 }
\end{equation}
We now define a family of shortest path algorithms. Given a SP problem, the algorithms evaluate a set of edges $\evalEdges \subset \edgeSet$ (verify if they are valid) and return a path $\path^*$ upon halting. Two conditions must be met:
\begin{enumerate}
    \item The returned path $\path^*$ is verified to be feasible, i.e. $\forall \edge \in \path^*,~\edge \in \evalEdges,~\world(\edge) = 1$
    \item All paths shorter than $\path^*$ are verified to be infeasible, i.e. $\forall \path_i,~\length(\path_i) < \length(\path^*),~\exists \edge \in \path_i,~\edge \in \evalEdges,~\world(\edge) = 0 $
\end{enumerate}

\subsection{The Lazy Shortest Path (\lazysp) Framework}
\label{sec:problem_formulation:lazysp}
We are interested in shortest path algorithms that minimize the number of evaluated edges $\abs{\evalEdges}$.%
\footnote{The framework can be extended to handle non-uniform evaluation cost as well}
These are \emph{lazy} algorithms, i.e. they seek to defer the evaluation of an edge as much as possible. 
When this laziness is taken to the limit, one arrives at the \emph{Lazy Shortest Path} (\lazysp) class of algorithms. 
Under a set of assumptions, this framework can be shown to contain the optimally lazy algorithm~\citep{haghtalab2017provable}. 

\algoref{alg:lsp} describes the \lazysp framework. The algorithm maintains a set of evaluated edges that are valid $\edgesValid$ and invalid $\edgesInvalid$. At every iteration, the algorithm lazily finds the shortest path $\path$ on the potentially valid graph $\explicitGraph = \pair{\vertexSet}{\edgeSet \setminus \edgesInvalid}$ \emph{without evaluating any new edges} (\algolineref{alg:lsp:sp}). It then calls a function, \selector, to select an edge $\edge$ from this path $\path$ (\algolineref{alg:lsp:select}). Depending on the outcome, this edge is added to either $\edgesValid$ or $\edgesInvalid$. This process continues until the conditions in \sref{sec:problem_formulation:sp} are satisfied, i.e. the shortest feasible path is found. 

\begin{algorithm}[tb]
\SetAlgoLined
\caption{\lazysp \label{alg:lsp}}
\SetKwInOut{Input}{Input}
\SetKwInOut{Parameter}{Parameter}
\SetKwInOut{Output}{Output}
\Input{$\text{Graph } \explicitGraph, \text{ start } \start{}, \text{ goal } \goal{}, \text{ world } \world$}
\Parameter{$\selector$}
\Output{$\text{Path}\ \path^*, \text{ evaluated edges } \evalEdges{}$}
\vspace{2mm}
$\edgesValid \gets \emptyset$ \Comment{Valid evaluated edges} \\
$\edgesInvalid \gets \emptyset$ \Comment{Invaid evaluated edges} \\
\vspace{1mm}
\Repeat{feasible path found $\text{s.t.} \; \forall \edge \in \path, \edge \in \edgesValid$}
{
$\path \gets \textsc{ShortestPath}(\edgeSet \setminus \edgesInvalid)$ \label{alg:lsp:sp}\\
$\edge \gets \selector(\path, \edgesValid, \edgesInvalid)$ \Comment{Select edge on $\path$} \label{alg:lsp:select}\\
\uIf{$\world(\edge)\neq 0$}
{
    $\edgesValid \gets \edgesValid \cup \{ \edge \}$
}
\Else
{
    $\edgesInvalid \gets \edgesInvalid \cup \{ \edge \}$
}
}
\KwRet\ \{$\path^* \gets \path$, $\evalEdges \gets \edgesValid \cup \edgesInvalid$\};
\end{algorithm}

The algorithm has one free parameter - the \selector function. The only requirement for a valid \selector is to select an edge on the path. As shown in \citep{dellin2016unifying}, one can design a range of selectors such as:
\begin{enumerate}
    \item \selectorForward: select the first unevaluated edge $\edge \in \path$. Effective if invalid edges are near the start.
    \item \selectorBackward: select the last unevaluated edge $\edge \in \path$. Effective if invalid edges are near the goal.
    \item \selectorAlternate: alternates between first and last edge. This approach hedges its bets between start and goal.
    \item \selectorFailFast: selects the least likely edge $\edge \in \path$ to be valid based on prior data. 
    \item \selectorPostFailFast: selects the least likely edge $\edge \in \path$ to be valid using a Bayesian posterior based on edges checked so far.
\end{enumerate}

While these baseline selectors are very effective in practice, their performance, i.e. the number of edges evaluated $\abs{\evalEdges}$ depends on the underlying world $\world$ which dictates which edges are invalid. Hence the goal is to compute a good \selector that is effective given a \emph{distribution of worlds}, $P(\world)$. We formalize this as follows

\begin{problem}[Optimal Selector Problem] \label{prob:opt_select}
\\ Let the edges evaluated by \selector on world $\world$ be denoted by $\evalEdges(\world, \selector)$. Given a distribution of worlds, $P(\world)$, find a \selector that minimizes the expected number of evaluated edges, i.e. 
\begin{equation*}
\min \expect{\world \sim P(\world)}{\abs{\evalEdges(\world, \selector)}}
\end{equation*}
\end{problem}

Problem~\ref{prob:opt_select} is a sequential decision making problem, i.e. decisions made by the selector in one iteration (edge selected) affects the input to the selector in the next iteration (shortest path). We show how to formally handle this in the next section. 
It's interesting to note that Problem~\ref{prob:opt_select} can be solved optimally under certain strong assumptions as detailed in Section~\ref{sec:analysis}.

\subsection{Mapping the Optimal Selector Problem to an MDP}
\label{sec:problem_formulation:mdp}
We map Problem~\ref{prob:opt_select} to a Markov Decision Process (MDP) $\langle \stateSpace, \actionSpace, \transFn, \rewardFn, \discount \rangle$ as follows:

\subsubsection*{State Space}
The state $\state = (\edgesValid, \edgesInvalid)$ is the set of evaluated valid edges $\edgesValid$ and evaluated invalid edges $\edgesInvalid$. This can be represented by a vector of size $\abs{\edgeSet}$, each element being one of $\{-1, 0, 1\}$ - unevaluated, evaluated invalid, and evaluated valid respectively.
For simplicity, we assume that the explicit graph $\explicitGraph = (\vertexSet, \edgeSet)$ is fixed.\footnote{We can handle a varying graph by adding it to the state space.} 

Since each $\edge \in \edgeSet$ can be in one of $3$ sets, the cardinality of the state space is $\card{S} = 3^{\card{\edgeSet}}$.

The MDP has an absorbing goal state set $\stateAbs \subset \stateSpace$ which is a set of states where all the edges on the current shortest path are evaluated to be valid, i.e. 
\small
\begin{equation}
    \stateAbs = \setst{ ( \edgesValid, \edgesInvalid ) }{ \forall \edge \in \textsc{ShortestPath}(\edgeSet \setminus \edgesInvalid), \edge \in \edgesValid}
\end{equation}
\normalsize
 
\subsubsection*{Action Set}
The action set $\actionSpace(\state)$ is the set of unevaluated edges on the current shortest path, i.e.
\begin{equation}
    \actionSpace(\state) = \{ \edge \in \textsc{ShortestPath}(\edgeSet \setminus \edgesInvalid), \edge \notin \{ \edgesValid \cup \edgesInvalid \} \}
\end{equation}

\subsubsection*{Transition Function}
Given a world $\world$, the transition function is deterministic $\state' = \Gamma(\state, \action, \world)$:
\begin{equation}
    \Gamma(\state, \action, \world) =   \begin{cases} 
   (\edgesValid \cup \{\edge\}, \edgesInvalid)           & \text{if } \world(\edge) = 1 \\
   (\edgesValid, \edgesInvalid \cup \{ \edge \})         & \text{if } \world(\edge) = 0
  \end{cases}
\end{equation}

Since $\world$ is latent and distributed according to $P(\world)$, we have a stochastic transition function
\begin{equation*}
 \transFn(\state, \action, \state') = \sum_{\world} P(\world) \Ind(\state = \Gamma(\state, \action, \world))
\end{equation*}

\subsubsection*{Reward Function}
The reward function penalizes every state other than the absorbing goal state $\stateAbs$, i.e.
\begin{equation}
    \rewardFn(\state, \action) =   \begin{cases} 
   0             & \text{if } \state \in \stateAbs  \\
   -1            & \text{otherwise } 
  \end{cases}
\end{equation}

$\gamma$ is a discount factor that is used to favor immediate rewards over later ones. We can define the value of a given policy $\pi$ as $\valueFnPolicy{\pi}(s) = \expect{}{\sum_{t=0}^\infty\gamma^{t}c(s_t, a_t) \mid s_{0}=s}$ and the action-value function as $\QFn{\pi}(s,a) =$ \\ $\expect{}{\sum_{t=0}^\infty\gamma^{t}c(s_t, a_t) \mid s_{0}=s, a_{0}=a}$ where the expectation is over the policy and transition function. Further, we can also define the advantage function $A^{\pi}(s,a) = Q^{\pi}(s,a) - V^{\pi}(s)$ which measures how good an action is compared to the action taken by the policy in expectation.


\section{Challenges in Solving the MDP}
\label{sec:challenge}

In this section, we examine tiny graphs and show that even for such problems, a choice of world distributions where edges are correlated can affect \selector choices. However, by solving the MDP using tabular Q-learning we can automatically recover the optimal \selector. 

\subsection{Experimental setup}
We train selectors on two different graphs and corresponding distribution of worlds $P(\world)$. 

\begin{figure}[!t]
\centering
\includegraphics[width=\columnwidth]{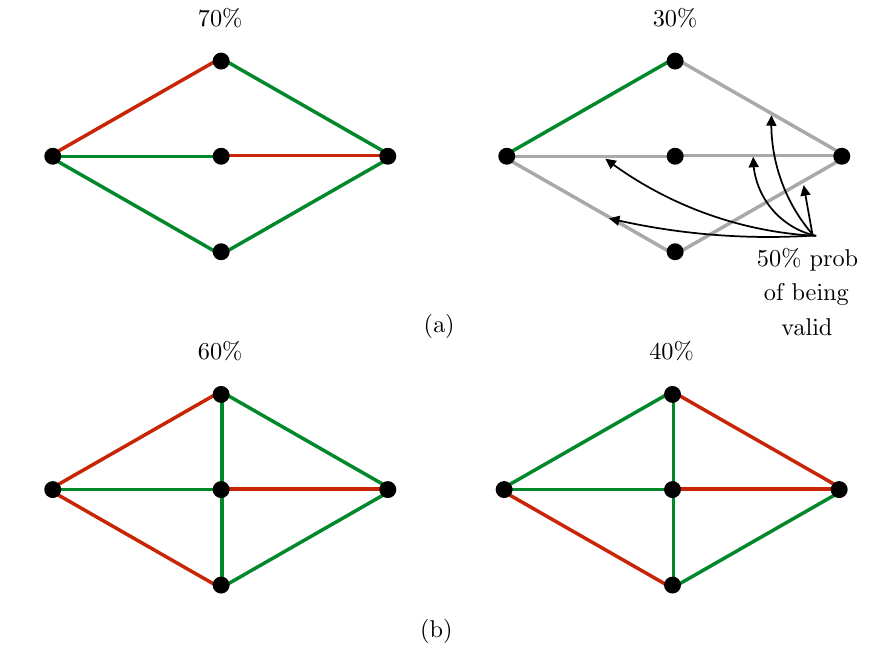}
\caption{Distribution over worlds for (a) Environment 1 and (b) Environment 2. The goal is to find a path from left to right. Edges are valid (green) or invalid (red) \fullFigGap}
\label{fig:simple_env}
\end{figure}

\subsubsection*{Environment 1} 
Fig.~\ref{fig:simple_env}(a) illustrates the distribution of Environment 1. The graph has $6$ edges. With $70 \%$ probability, $\mathtt{top\_left}$ edge is invalid. If $\mathtt{top\_left}$ is invalid, then $\mathtt{middle\_right}$ is always invalid. If $\mathtt{top\_left}$ is valid, then with $50\%$ probability, $\mathtt{top\_right}$ is invalid plus any one of remaining four are invalid. 

The optimal policy is to check $\mathtt{top\_left}$ edge first. 
\begin{itemize}
    \item[--] \emph{If invalid}, check $\mathtt{middle\_right}$ (which is necessarily invalid) and check bottom two edges which are feasible. This amounts to $4$ evaluated edges.
    \item[--] \emph{If valid}, check other edges in order as they all have $50\%$ probability of being valid.
\end{itemize}

\subsubsection*{Environment 2} 
Fig.~\ref{fig:simple_env}(b) illustrates the distribution of Environment 2. The graph has $8$ edges. 
$60 \%$ of the time $\mathtt{top\_left}$, $\mathtt{middle\_right}$ and $\mathtt{bottom\_left}$ are invalid. Else, the $\mathtt{top\_right}$ and the $\mathtt{middle\_right}$ are invalid. 
Intuitively, $60 \%$ of the time, \textsc{SelectAlternate} is optimal and $40 \%$ of the time, \textsc{SelectBackward} is the best. 

\subsection{Solving the MDP via Q-learning}

We apply tabular Q-learning~\cite{watkins1992q} to compute the optimal value $Q^*(\state, \action)$. Broadly speaking, the algorithm utilizes an $\epsilon-$greedy policy to visit states, gather rewards, and perform Bellman backups to update the value function. Environment 1 has $729$ states, Environment 2 has $6561$ states. The learning parameters are shown in Table~\ref{tab:q_learning_params}. 

\figref{fig:tab_q_results} shows the average reward during training for Q-learning. Environment 1 converges after $\approx 1000$ episodes, environment 2 after $\approx 3000$ episodes. Table~\ref{tab:experimental_results} shows a comparison of Q-learning with other heuristic baselines in terms of average reward on a validation dataset of $1000$ problems. In Environment 1, the learner discovers the optimal policy. Interestingly, \selectorAlternate also achieves this result since the correlated edges are alternating.  In Environment 2, the learner has a clear margin as compared to heuristic baselines, all of which are vulnerable to one of the modes.

This shows that, even on such small graphs, it is possible to create an environment where heuristic baselines fail. The fact that the learner can recover optimal policies is promising. 

\begin{table}[!t]
\centering
\caption{Q-learning parameters.}
\begin{tabulary}{\columnwidth}{LCC}\toprule
 {\bf Parameter} & {\bf Environment 1} & {\bf Environment 2} \\ \midrule
 Number of episodes & $3000$ & $3500$ \\
 Exploration episodes & $100$ & $150$ \\
 $\epsilon_0$  & $1$ & $1$ \\
 Discount factor & $1$ & $1$ \\
 Learning rate & $0.5$ & $0.5$ \\
 \bottomrule
\end{tabulary}
\label{tab:q_learning_params}
\end{table}

\begin{table}[t!]
\centering
\caption{Average reward after 1000 test episodes.}
\begin{tabulary}{\columnwidth}{LCC}\toprule
 {\bf Method}  & {\bf Environment 1} & {\bf Environment 2} \\ \midrule
 Tabular Q-learning  & $\cmark -3.85$ & $\cmark-5.24$\\
 \selectorForward & $-4.54$ & $-6.00$ \\
 \selectorBackward  & $-4.42$ & $-5.79$ \\
 \selectorAlternate & $-3.86$ & $-6.00$\\
 \selectorRandom & $-4.48$ & $-5.90$ \\
 \bottomrule
\end{tabulary}
\label{tab:experimental_results}
\end{table}

\begin{figure}[!t]
    \centering
    \captionsetup[subfigure]{justification=centering}
    \begin{subfigure}[t]{0.8\columnwidth}
        \includegraphics[height=1.0in]{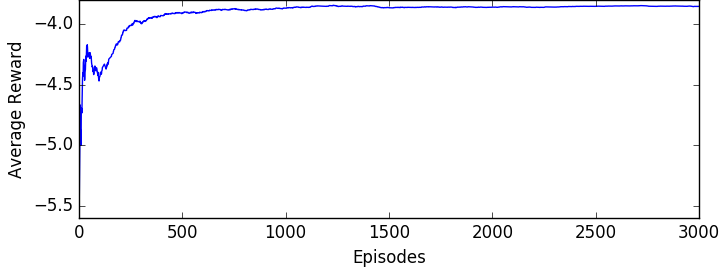}
        \caption{Environment 1 (3000 train episodes)}
    \end{subfigure}
   \hspace{15mm}
    \begin{subfigure}[t]{0.8\columnwidth}
        \includegraphics[height=1.0in]{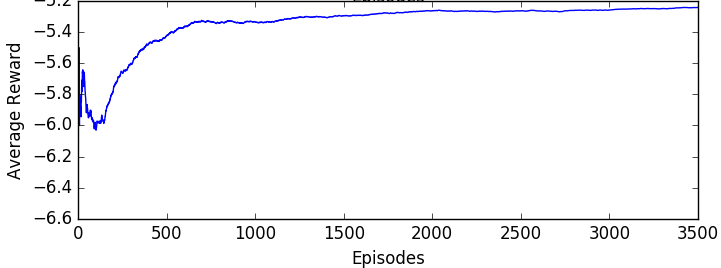}
        \caption{Environment 2 (3500 train episodes)}
    \end{subfigure}
    \caption{ Average reward per epsiode of Tabular Q-learning. \fullFigGap}
    \label{fig:tab_q_results}
\end{figure}

\subsection{Challenges on scaling to larger graphs}

While we can solve the MDP for tiny graphs, we run into a number of problems as we try to scale to larger graphs:

\paragraph{Exponentially large state space}
The size of the state space is $\card{S} = 3^{\card{\edgeSet}}$. This leads to exponentially slower convergence rates as the size of the graph increases. Even if we could manage to visit only the relevant portion of this space, this approach would not generalize across graphs. 

\paragraph{Convergence issues with approximate value iteration}
We can scale to large graphs if we use a function approximator. In this case, we have to featurize $(\state, \action)$ as a vector $f$, i.e. we are trying to approximate $Q(s,a) \approx Q(f)$. Fortunately, we have a set of baseline heuristics~\ref{sec:problem_formulation:lazysp} that can be used as a feature vector. This choice allows us to potentially improve upon baselines and easily switch between problem domains.

We run into another problem - approximate value iteration is not guaranteed to converge~\citep{gordon1995stable}. This is exaggerated in our case where $f$ is a set of baseline heuristics that may not retain the same information content as the state $\state$. Hence multiple states map to the same feature $f$, which leads to oscillations and local minima.

\paragraph{Sparse rewards}
Every state gets a penalization except the absorbing state, i.e. rewards are sparse. Because we are using a function approximator, updates to $Q(f)$ for reaching the goal state are overridden by updates due to $-1$ penalization.


\section{Approach}
\label{sec:approach}

Our approach, \algName (\algFullName), is to imitate clairvoyant oracles that can show how to evaluate edges optimally given full knowledge of the MDP at training time. To deal with distribution mismatch between oracle and learner, we use established techniques for iterative supervised learning.

\subsection{Optimistic Value Estimate using a Clairvoyant Oracle}
\label{sec:approach:oracle}
Consider the situation where the world $\world$ is fully known to the selector, i.e. the $0/1$ status of all edges are known. The selector can then judiciously select edges that are not only invalid, but eliminate paths quickly. We call such a selector a \emph{clairvoyant oracle}. We show that the optimal clairvoyant oracle, that evaluates the minimal number of edges, is the solution to a set cover problem.
\begin{theorem}[Clairvoyant Oracle as Set Cover]
\label{thm:set_cover}
Let $\state = (\edgesValid, \edgesInvalid)$ be a state. Let $V^*(\state, \world)$ be the optimal state action value when the world $\world$ is known. Then $V^*(\state, \world)$ is the solution to the following set cover problem
\begin{equation}
\begin{aligned}
\label{eq:set_cover}
-\minprob{\substack{\evalEdges \subset \setst{\edge \in \edgeSet}{\world(\edge)=0}} } & \abs{\evalEdges} \\
\mathrm{s.t.} \quad & \forall \path,\; \length(\path) < \length(\path^*), \path \cap \edgesInvalid = \emptyset, \\
		   & \hphantom{\forall \path,\;} \path \cap \evalEdges \neq \emptyset
\end{aligned}
\end{equation}
\end{theorem}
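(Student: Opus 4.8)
The plan is to separate the claim into one standard Markov-decision-process fact and one combinatorial identity. Since the world $\world$ is revealed, the transition $\Gamma(\state,\action,\world)$ is deterministic and every non-absorbing transition has reward $-1$; hence, taking $\discount = 1$ as in the tabular experiments (for $\discount<1$ the value is still strictly monotone in the number of steps, so the argmin is unchanged), the optimal value is $V^*(\state,\world) = -\,d(\state,\world)$, where $d(\state,\world)$ is the least number of edge evaluations needed to drive the search from $\state$ to an absorbing state in $\stateAbs$ using only \lazysp-admissible selections, i.e.\ picking edges on the current shortest path. It therefore suffices to show that $d(\state,\world)$ equals the optimum of \eref{eq:set_cover}, up to the evaluations of the remaining edges of the certified path $\path^*$, which are forced and common to every run and which the statement absorbs.

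Next I would record the structure of any terminating run and read off the lower bound. Let $(\edgesValid',\edgesInvalid')$ be the terminal state. Only genuinely invalid edges ever enter $\edgesInvalid'$, so $\edgeSet\setminus\edgesInvalid'$ still contains the (world-determined) shortest feasible path $\path^*$; consequently the terminal shortest path has length $\length(\path^*)$ and, for the state to lie in $\stateAbs$, all of its edges must be in $\edgesValid'$ — this is the forced additive term. Moreover, at termination every path $\path$ with $\length(\path) < \length(\path^*)$ must be infeasible on $\edgeSet\setminus\edgesInvalid'$, i.e.\ must contain an edge of $\edgesInvalid'$; if in addition $\path\cap\edgesInvalid = \emptyset$, that edge was absent in $\state$ and was therefore actively evaluated by the oracle. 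Hence the set $\evalEdges \subseteq \setst{\edge\in\edgeSet}{\world(\edge)=0}$ of invalid edges the run evaluates satisfies exactly the constraints of \eref{eq:set_cover}, giving $d(\state,\world) \ge \abs{\path^*\setminus\edgesValid} + \mathrm{OPT}$, where $\mathrm{OPT}$ denotes the optimum of \eref{eq:set_cover}.

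For the matching upper bound I would realize an arbitrary feasible cover $\evalEdges$ of \eref{eq:set_cover} by an explicit \lazysp-admissible schedule: while the current shortest path $\path$ has $\length(\path) < \length(\path^*)$, it survives in the current graph, hence is disjoint from $\edgesInvalid$, hence meets $\evalEdges$; such an edge is still unevaluated (an evaluated invalid edge would sit in $\edgesInvalid'$ and block $\path$), so the \selector may pick it, and it is invalid. Each such step deletes one edge of $\evalEdges$ from play, so after at most $\abs{\evalEdges}$ steps no strictly-shorter path survives and the shortest path has length $\length(\path^*)$; from there the \selector evaluates the remaining valid edges of that path and halts. Minimizing over covers yields $d(\state,\world) \le \abs{\path^*\setminus\edgesValid} + \mathrm{OPT}$, which with the lower bound gives $V^*(\state,\world) = -\abs{\path^*\setminus\edgesValid} - \mathrm{OPT}$, i.e.\ the set-cover value of \eref{eq:set_cover} up to the forced constant. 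I would close by remarking that \eref{eq:set_cover} is literally a hitting-set / set-cover instance (ground set $=$ surviving shorter paths, candidate sets $=$ edges), which is exactly why it is NP-hard and admits a greedy $O(\log)$-approximation, tying into the later claims.

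The main obstacle is the upper-bound realization, because the \selector may only touch edges on whatever path $\shortestPath$ currently returns, so I cannot schedule the evaluations in $\evalEdges$ freely. The easy part — a cover edge is always present on a surviving strictly-shorter path — is handled above; the delicate part is the final length-$\length(\path^*)$ phase, where $\shortestPath$ might return a minimum-length path that is itself infeasible, forcing an evaluation outside $\evalEdges\cup\path^*$. I expect to neutralize this by fixing the tie-breaking rule of $\shortestPath$ and arguing that an optimal oracle never wastes an evaluation on a valid edge off $\path^*$ (a short exchange argument), so the only terms that survive are the forced certification of $\path^*$ and the set-cover optimum; making this accounting watertight (and, if the stated equality is to hold without the additive constant, showing that constant is common to all policies) is the crux of the proof.
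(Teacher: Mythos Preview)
Your core identification---ground set = surviving strictly-shorter paths, candidate cover elements = invalid edges---matches the paper's proof exactly. The paper, however, gives only a one-paragraph sketch: it names the set-cover instance and asserts that a minimum cover ``eliminates all shorter paths, hence this is equal to the optimal value $-V^*(\state,\world)$,'' without separating the two directions, without addressing the \lazysp admissibility constraint that the \selector may only pick edges on the current shortest path, and without mentioning the forced certification of $\path^*$.

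Your treatment is therefore strictly more careful than the paper's. Two points worth noting. First, the additive term $\abs{\path^*\setminus\edgesValid}$ that you isolate is real and is silently absorbed in the paper's statement; your equality $V^*(\state,\world) = -\abs{\path^*\setminus\edgesValid} - \mathrm{OPT}$ is the honest version, and the paper's sketch simply does not account for it. Second, the obstacle you flag in the final paragraph---that at length $\length(\path^*)$ the call to \shortestPath might return an infeasible tie rather than $\path^*$, forcing an extra invalid-edge evaluation not charged to the cover---is genuine and is likewise not addressed in the paper. Your proposed fix (tie-breaking plus an exchange argument that an optimal clairvoyant selector never evaluates a valid edge off $\path^*$) is the natural one; the paper implicitly assumes strict inequality $\length(\path) < \length(\path^*)$ handles everything. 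In short, the approaches coincide in spirit, but your decomposition into lower and upper bounds and your explicit treatment of admissibility and ties supplies the rigor the paper's sketch omits.
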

where $\path^*$ is the shortest feasible path for world $\world$.
\begin{proof}
(Sketch) Let $\pathSet = \{\path_1, \dots, \path_n\}$ be the set of paths that satisfy the constraints of~\eref{eq:set_cover}
\begin{enumerate}
	\item Shorter than $\path^*$, i.e. $\length(\path_i) < \length(\path^*)$
	\item Paths are not yet invalidated i.e. $\path \cap \edgesInvalid = \emptyset$
\end{enumerate}

Let $\setst{\edge \in \edgeSet}{\world(\edge)=0}$ be the set of invalid edges. Each edge $\edge$ covers a path $\path_i \in \pathSet$ if $\edge \in \path_i$. 
We define a cover as a set of edges $\evalEdges$ that covers all paths in $\pathSet$, i.e. $\path_i \cap \evalEdges \neq \emptyset$.

If we select a min cover, i.e. $\min \abs{\evalEdges}$ then all shorter paths will be eliminated. Hence this is equal to the optimal value $-V^*(\state, \world)$.
\end{proof}

Theorem~\ref{thm:set_cover} says that given a world and a state of the search, the clairvoyant oracle selects the minimum set of invalid edges to eliminate paths shorter than the shortest feasible path. 

Let $\policyOracle(\state,\world)$ be the corresponding oracle policy. We note that the optimal clairvoyant oracle can be used to derive an upper bound for the optimal value
\begin{equation}
\label{eq:upper_bound}
	Q^*(\state, \action) \leq  Q^{\policyOracle}(s,a) = \sum_{\world} P(\world | \state) Q^{\policyOracle}(s,a,\world)
\end{equation}
where $P(\world | \state)$ is the posterior distribution over worlds given state and $Q^{\policyOracle}(s,a,\world)$ is the value of executing action $a$ in state $s$ and subsequently rolling-out the oracle. Hence this upper bound can be used for learning. 

\subsection{Approximating the Clairvoyant Oracle}
\label{sec:approach:approx_oracle}

\begin{algorithm}[!t]
	\caption{\textsc{Approximate Clairvoyant Oracle}   \label{alg:approximate_oracle}}
	\SetKwInOut{Input}{Input}
\SetKwInOut{Output}{Output}
\Input{$\text{State } \state = (\edgesValid, \edgesInvalid), \text{ world } \world$}
\Output{$\text{Action } \action$}
	Compute shortest path $\hat{\path} = \textsc{ShortestPath}(\edgeSet \setminus \edgesInvalid)$ \\
	$\Delta \gets 0_{\abs{\edgeSet} \times 1}$\\
	\For{$\edge \in \hat{\path}, \world(\edge) = 0 $}
	{
		$\Delta(\edge) \gets \length(\textsc{ShortestPath}(\edgeSet \setminus \{ \edgesInvalid \cup \{ \edge \} \} )) - \length(\hat{\path})$\\
	}
	\KwRet\ Action $a = \argmaxprob{\edge \in \hat{\path}} \Delta(\edge)$;
\end{algorithm}

Since set cover is NP-Hard, we have to approximately solve \eref{eq:set_cover}. Fortunately, a greedy approximation exists which is near-optimal. The greedy algorithm iterates over the following rule:
\begin{equation}
\begin{aligned}
\label{eq:greedy_set_cover}
 	&\edge_i = \argmaxprob{\edge \in \edgeSet, \world(\edge)=0} \;\abs{ \setst{\path}{\length(\path) < \length(\path^*), \; \path \cap \edgesInvalid = \emptyset, \; \edge \in \path} } \\
 	&\evalEdges \gets \evalEdges \cup \{ \edge_i \}
\end{aligned}
\end{equation} 
The approach greedily selects an invalid edge that covers the maximum number of shorter paths, which have not yet been eliminated. This greedy process is repeated until all paths are eliminated.

There are two practical problems with computing such an oracle. First, exhaustively enumerating all shorter paths $\setst{\path}{\length(\path) < \length(\path^*)}$ is expensive, even at train time. Second, if we simply wish to query the oracle for which edge to select on the current shortest path $\hat{\path} = \textsc{ShortestPath}(\edgeSet \setminus \edgesInvalid)$, it has to execute \eref{eq:greedy_set_cover} potentially multiple times before such an edge is discovered - which also can be expensive. Hence we perform a double approximation.

The first approximation to \eref{eq:greedy_set_cover} is to constrain the oracle to only select an edge on the current shortest path $\hat{\path} = \textsc{ShortestPath}(\edgeSet \setminus \edgesInvalid)$ \vspace{-2mm}
\begin{equation}
\label{eq:constr_set_cover}
\approx \argmaxprob{\edge \in \hat{\path}, \; \world(\edge)=0} \;\abs{ \setst{\path}{\length(\path) \leq \length(\path^*), \; \path \cap \edgesInvalid = \emptyset, \; \edge \in \path} } 
\end{equation} 

The second approximation to \eref{eq:constr_set_cover} is to replace the number of paths covered with the marginal gain in path length on invalidating an edge. 
\begin{equation}
\label{eq:length_surrogate}
\approx \argmaxprob{\edge \in \hat{\path}, \; \world(\edge)=0} \; \length(\textsc{ShortestPath}(\edgeSet \setminus \{ \edgesInvalid \cup \{\edge\} \} )) - \length(\hat{\path})
\end{equation} 

Alg.~\ref{alg:approximate_oracle} summarizes this approximate clairvoyant oracle. 

\subsection{Bootstrapping with Imitation Learning}
\label{sec:approach:imitation}

Imitation learning is a principled way to use the clairvoyant oracle $\policyOracle(\state, \world)$ to assist in training the learner $\policy(\state)$. In our case, we can use the oracle action value $Q^{\policyOracle}(\state, \action)$ as a target for our learner as follows:
\begin{equation}
\label{eq:aggrevate}
\argmaxprob{\policy \in \policySpace}\;  \expect{\state \sim d_{\policy}(\state)}{Q^{\policyOracle}(\state, \policy(\state))} \\
\end{equation}
where $d_{\policy}(\state)$ is the distribution of states. Note that this is now a classification problem since the labels are provided by the oracle. However the distribution $d_{\policy}$ depends on the learner's $\policy$. \citet{ross2014reinforcement} show that this type of imitation learning problem can be reduced to interactive supervised learning.

We simplify further. Computing the oracle value requires rolling out the oracle until termination. We empirically found this to significantly slow down training time. Instead, we train the policy to directly predict the action that is selected by the oracle. This is the same as (\ref{eq:aggrevate}) but with a $0/1$ loss~\citep{ross2011reduction} -
\begin{equation}
\label{eq:dagger}
\argmaxprob{\policy \in \policySpace}\;  \expect{\state \sim d_{\policy}(\state)}{ \Ind(\policy(\state) = \policyOracle(\state, \world)) } \\
\end{equation}

We justify this simplification by first showing that maximizing action value is same as maximizing the advantage $Q^{\policyOracle}(\state, \action) - V^{\policyOracle}(\state)$. Since all the rewards are $-1$, the advantage can be lower bounded by the $0/1$ loss. We summarize this as follows:
\begin{equation}
\begin{aligned}
&\maxprob{\policy \in \policySpace}\;  \expect{\state \sim d_{\policy}(\state)}{Q^{\policyOracle}(\state, \policy(\state))} \\
= \quad & \maxprob{\policy \in \policySpace} \; \expect{\state \sim d_{\policy}(\state)}{Q^{\policyOracle}(\state, \policy(\state)) - V^{\policyOracle}(\state)} \\
\geq \quad & \maxprob{\policy \in \policySpace} \; \expect{\state \sim d_{\policy}(\state)}{ \Ind(\policy(\state) = \policyOracle(\state, \world)) - 1 } \\
\end{aligned}
\end{equation}

Finally, we do not use the exact clairvoyant oracle but rather an approximation (Section~\ref{sec:approach:approx_oracle}). In other words, there can exist policies $\policy \in \policySpace$ that outperform the oracle. In such a case, one can potentially apply policy improvement after imitation learning. However, we leave the exploration of this direction to future work.

\subsection{Algorithm}
\label{sec:approach:algorithm}

\begin{algorithm}[!t]
	\caption{\algName   \label{alg:lsp_learn}}
	\SetKwInOut{Input}{Input}
\SetKwInOut{Parameter}{Parameter}
\SetKwInOut{Output}{Output}
\Input{$\text{World distribution } P(\world), \text{ oracle } \policyOracle$}
\Parameter{$\text{Iter } N, \text{roll-in policy } \policyRoll, \text{ mixing } \{\mixParam_i\}_{i=1}^N$}
\Output{$\text{Policy } \policyLearn$}
	Initialize $\dataset \leftarrow \emptyset,\; \policyLearn_{1}$ to any policy in $\policySpace$ \\ \label{lst:line:}
	\For{$i = 1, \ldots, N$}
	{   Initialize sub-dataset $\dataset_{i} \leftarrow \emptyset$ \\
 		Let mixture policy be $\policyMix =\;\mixParam_{i}\policyRoll\; + \; (1 \; - \;\mixParam_{i})\policyLearn_{i} $ \\
 		\For{$j = 1,\ldots,m$}
 		{   
 			Sample $\world \sim P(\world)$; \\  
 			Rollin $\policyMix$ to get state trajectory $\{ \state_t \}_{t=1}^T$ \\
 			Invoke oracle to get $\action_t = \policyOracle(s_t, \world)$ \\
		    $\dataset_i \gets \dataset_{i} \cup \{ \left( \state_t, \action_t \right) \}_{t=1}^T$ \; 
 	    }
 	    Aggregate data $\dataset \leftarrow \dataset \cup \dataset_{i}$; \\
 	    Train classifier $\policyLearn_{i+1}$  on $\dataset$;\\  
	}
	\KwRet\ Best $\policyLearn$ on validation;
\end{algorithm}

The problem in (\ref{eq:dagger}) is a non-\emph{i.i.d} classification problem - the goal is to select the same action the oracle would select on the \emph{on policy distribution of learner}. \citet{ross2011reduction} proposed an algorithm, \daggerAlg, to exactly solve such problems. 

Alg.~\ref{alg:lsp_learn}, describes the $\algName$ framework which iteratively trains a sequence of policies $\seq{\policyLearn}{N}$. At every iteration $i$, we collect a dataset $\dataset_i$  by executing $m$ different episodes. In every episode, we sample a world $\world$ which already has every edge evaluated. We then roll-in a policy (execute a selector) which is a mixture $\policyMix$ that blends the learner's current policy, $\policyLearn_{i}$ and a base roll-in policy $\policyRoll$ using blending parameter $\mixParam_{i}$. At every time step $t$, we query the clairvoyant oracle with state $\state_t$ to receive an action $\action_t$. We use the approximate oracle in Alg.~\ref{alg:approximate_oracle}. We then extract a feature vector $f$ from all $(\state_t, \action)$ tuples and create a classification datapoint. We add this datapoint to the dataset $\dataset_i$. At the end of $m$ episodes, this data is then \emph{aggregated with the existing dataset} $\dataset$. A new classifier $\policyLearn_{i+1}$ is trained on the aggregated data. At the end of $N$ iterations, the algorithm returns the best performing policy on a set of held-out validation environments. 

We have two algorithms based on the choice of $\policyRoll$:
\begin{enumerate}
	\item \algName: We set $\policyRoll = \policyOracle$. This is the default mode of \daggerAlg. This uses the oracle state distribution to stabilize learning initially. 
	\item \algHeuristic: We set $\policyRoll$ to be the best performing heuristic on training as defined in \sref{sec:problem_formulation:lazysp}. This uses a heuristic state distribution to stabilize learning. Since the heuristic is realizable, it can have a stabilizing effect on datasets where the oracle is far from realizable.
\end{enumerate}

In the next section we discuss the different sources of error in our proposed framework and provide regret guarantees on the performance of \algName.

\section{Theoretical Bounds on Performance}
\label{sec:analysis}

We wish to bound the performance of the policy output by \algName $\policyLEARN$ versus the optimal MDP policy $\policyOpt$.  

\begin{equation}
\expect{\state \sim d_{\policy}(\state)}{V^{\policyOpt}(s) - V^{\policyLEARN}(s)} \leq \epsilon
\end{equation}
where $V^{\policyOpt}(s)$ is the value of the optimal policy, $V^{\policyLEARN}(s)$ is the value of the STROLL policy and $d_{\policy}(\state)$ is the distribution of states visited by the learner. To understand the sub-optimality bound $\epsilon$, we need to examine the various components of this error. 

\subsection{Component 1: Unrealizability of the Clairvoyant Oracle}
 The first source of approximation error comes from our use of a clairvoyant oracle. The clairvoyant oracle has access to the true status of all edges in the graph whereas the learner is only privy to the status of edges checked so far. The realizability gap between the two is vast, resulting in a trivially large regret bound~\citep{ross2014reinforcement}. Instead, ~\citet{choudhury2017data} show that imitating the clairvoyant oracle is in fact equivalent to imitating a corresponding \emph{hallucinating oracle}, that computes an instantaneous posterior over worlds given the edge evaluations so far and computes the expected clairvoyant oracle action value over this posterior  i.e,
\begin{equation}
\label{eq:hallucinating_oracle}
Q^{OR}(\state, \action) =  \expect{\world \sim P(\world | \state)}{\QFn{\policyOR}(\state, \action, \world)}
\end{equation}
where $Q^{OR}(\state, \action)$ is the value of action $\action$ in state $\state$, $P(\world | \state)$ is the posterior over worlds and $\QFn{\policyOR}(\state, \action, \world)$ is the action value computed by the clairvoyant oracle. 

The hallucinating oracle policy is defined as one that greedily maximizes the action value $Q^{OR}(\state, \action)$
\begin{equation}
\policyORBel \cong \argmaxprob{a \in \actionSpace} \; Q^{OR}(\state, \action)
\end{equation}

The hallucinating oracle uses the same information as the learner and is equivalent to the well-known QMDP policy~\citep{Littman95learningpolicies}. The QMDP approximation, also known as \emph{hidsight optimization} operates under the assumption that the agent's uncertainty over the true world will be entirely eliminated after the next action. Thus, it simply estimates the expected Q-function weighted by the posterior probability over worlds. This results in an agent that chooses actions that maximize long-term rewards for all worlds weighted by their probability, while ignoring explicit information gathering. Policies based on this assumption have been shown to be effective in several POMDP domains in prior work~\citep{javdani2015shared,Koval-RSS-14,yoon2007ff}. However, the QMDP algorithm requires sampling from the \emph{true} posterior over worlds which is intractable in general. Nevertheless, Lemma 1 of \citep{choudhury2017data} state we are effectively imitating this hallucinating oracle by imitating the clairvoyant oracle. 

Hence, we assume that the error between the value of the optimal MDP policy and the hallucinating oracle is bounded by 
\begin{equation}
\label{eq:qmdp_error}
\norm{V^{\policyOpt}(s) - V^{\policyORBel}(s)}{\infty} \leq \epsilon_{\rm hal}
\end{equation}

We note that $\epsilon_{\rm hal}$ can be large for problems requiring a great deal of active information gathering and is hence difficult to quantify in general. 

\subsection{Component 2: Approximations in the Oracle Selector}
As discussed in Sec.~\ref{sec:approach:approx_oracle}, we make two simplifying approximations to efficiently compute the oracle at train time. Hence, instead of accurately computing $\QFn{\policyOR}(\state, \action, \world)$, an approximation $\QFn{\policyAOR}(\state, \action, \world)$ is computed instead. 

Consequently, this results in an approximate hallucinating oracle $\policyAORBel$ that computes an approximate action value $Q^{AOR}(\state, \action) =  \expect{\world \sim P(\world | \state)}{\QFn{\policyAOR}(\state, \action, \world)}$ and greedily maximizes it
\begin{equation}
\policyAORBel \cong \argmaxprob{a \in \actionSpace} \; Q^{AOR}(\state, \action)
\end{equation} 

We can then bound the error between value of the hallucinating oracle and the approximate hallucinating oracle by
\begin{equation}
\label{eq:approx_error}
\norm{V^{\policyORBel}(s) - V^{\policyAORBel}(s)}{\infty} \leq \epsilon_{\rm app}
\end{equation}

\subsection{Component 3: Errors from Imitation}

\algName imitates the actions demonstrated by the approximate oracle policy. Since our imitation learning back-end is \daggerAlg, we inherit the performance bounds from ~\citep{ross2011reduction}. 

The error between the learnt policy $\policyLearn$ and the demonstrator policy $\policyAORBel$ can be bounded using Theorem 4.1 in ~\citet{ross2011reduction}.
\begin{equation}
\begin{aligned}
\expect{\state \sim d_{\policy}(\state)}{V^{\policyAORBel}(s) - V^{\policyLEARN}(s)} &\leq  \epsilon_{\rm im} \\
& = A_{\infty} \epsilon_{\rm class} + \epsilon_{\rm reg} + O(\frac{1}{N})
\end{aligned}
\end{equation}
where $\epsilon_{\rm class}$ is the classification error of the best policy in the policy class on the aggregated dataset, $A_{\infty}$ is the maximum advantage w.r.t $\policyAORBel$ and $\epsilon_{\rm reg}$ is the average regret. 

We can now combine all components 
\begin{equation}
\begin{aligned}
& \expect{\state \sim d_{\policy}(\state)}{V^{\policyOpt}(s) - V^{\policyLEARN}(s)} \\
& \leq \expect{\state \sim d_{\policy}(\state)}{V^{\policyOpt}(s) - V^{\policyAORBel}(s)} + \epsilon_{\rm im} \\
& \leq \norm{V^{\policyOpt}(s) - V^{\policyAORBel}(s)}{\infty} + \epsilon_{\rm im} \\
& \leq \norm{V^{\policyOpt}(s) - V^{\policyORBel}(s)}{\infty} + \epsilon_{\rm app} + \epsilon_{\rm im} \\
& \leq \epsilon_{\rm hal} + \epsilon_{\rm app} + \epsilon_{\rm im} \\
\end{aligned}
\end{equation}

\subsection{Special Case: Optimal Selector for Independent Bernoulli Edges}
\label{sec:analysis:bern}
In this section we show that under certain conditions, \algName indeed contains the optimal policy in it's policy class.
We will show here that if $P(\world)$ is an independent Bernoulli distribution over edges $\vectorp$ and no two paths $\path_i, \path_j$ share an edge, the optimal \selector is the one that picks the edge on the shortest path with the lowest probability. As we show later in Section.~\ref{sec:experiments:setup}, this selector is part of the \algName policy class. Intuitively, the selector tries to eliminate each path as quickly as possible - the lack of overlap implies the selector does not have to reason over the consequences of eliminating a path.
{}

We first define a selector \selectorFailFast:
\begin{equation}
	\selectorFailFast \; \equiv \; \argmin_{\edge \in \path} \vectorp(\edge)
\end{equation}

We then show that \selectorFailFast eliminates a path optimally:
\begin{lemma}\label{lemma:fail_fast}
Given a path $\path$, \selectorFailFast minimizes the expected number of edges from $\path$ that are required to be evaluated to invalidate $\path$.
\end{lemma}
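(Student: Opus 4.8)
The plan is to reduce the statement to a classical exchange argument for sequential search. Fix the path $\path$ with edges $\edge_1, \dots, \edge_k$ and write $q_i := \vectorp(\edge_i)$ for the (prior) probability that $\edge_i$ is valid. While $\path$ remains the shortest candidate in \lazysp, a selector is invoked repeatedly on $\path$ and must return an unevaluated edge of $\path$; this process halts as soon as a probed edge is found invalid (so $\path$ is eliminated) or when all $k$ edges have been checked. Any selector restricted to $\path$ therefore amounts to choosing a permutation $\sigma$ of $\{1,\dots,k\}$ --- the order in which the edges of $\path$ are probed --- and the cost to be minimized is $N_\sigma$, the index of the first invalid edge in that order, with $N_\sigma = k$ when $\path$ turns out to be feasible.

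First I would compute $\expect{}{N_\sigma}$ in closed form via the tail-sum identity. Because $P(\world)$ is a product of independent Bernoullis, the event $\{N_\sigma \ge j\}$ is exactly the event that the first $j-1$ probed edges are all valid, so $\prob{N_\sigma \ge j} = \prod_{i=1}^{j-1} q_{\sigma(i)}$. Hence
\begin{equation}
\expect{}{N_\sigma} = \sum_{j=1}^{k}\prob{N_\sigma \ge j} = 1 + \sum_{j=1}^{k-1}\;\prod_{i=1}^{j} q_{\sigma(i)},
\end{equation}
i.e. the objective is the sum of all prefix products of the reordered sequence $(q_{\sigma(1)}, \dots, q_{\sigma(k)})$.

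Next I would run the standard adjacent-transposition (bubble-sort) argument on this objective. Suppose $\sigma$ probes $\edge_a$ immediately before $\edge_b$, at positions $j$ and $j+1$, and let $Q = \prod_{i<j} q_{\sigma(i)} \ge 0$ be the product of the probabilities probed earlier. Swapping $\edge_a$ and $\edge_b$ leaves every prefix product of length $<j$ unchanged and every prefix product of length $\ge j+1$ unchanged as well (each such product contains both $q_a$ and $q_b$); the only affected term is the length-$j$ prefix product, which is $Q q_a$ before the swap and $Q q_b$ after. So the swap cannot increase $\expect{}{N_\sigma}$ precisely when $q_a \le q_b$, and iterating these swaps shows that probing the edges of $\path$ in nondecreasing order of $\vectorp(\edge)$ is optimal.

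Finally I would identify this optimal order with \selectorFailFast. By independence of the edges, evaluating some edges of $\path$ does not change the posterior validity probability of the others, so the greedy rule $\argmin_{\edge \in \path}\vectorp(\edge)$ applied at each invocation reproduces exactly the nondecreasing-$\vectorp$ probing order and is therefore optimal. I expect the only subtle points to be bookkeeping ones: (i) making precise that "number of edges required to invalidate $\path$" is captured by $N_\sigma$ even in the feasible case, where $N_\sigma = k$ for every selector and so is irrelevant to the comparison; and (ii) the two places independence is used --- to get the clean product form of $\prob{N_\sigma \ge j}$, and to equate the greedy per-step rule with the globally sorted order --- which is where the independent-Bernoulli, non-overlapping-paths hypothesis of \sref{sec:analysis:bern} is actually invoked. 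The exchange argument itself is routine; keeping the independence assumptions honest is the (modest) main obstacle.
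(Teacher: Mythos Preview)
Your approach is the same adjacent-swap exchange argument the paper uses; the only difference is cosmetic. The paper writes the expectation as $\sum_{l} l\bigl(\prod_{m<l} p_m\bigr)(1-p_l)$ and does the algebra of the swap directly, whereas you pass through the tail-sum identity to get $\expect{}{N_\sigma} = 1 + \sum_{j=1}^{k-1}\prod_{i\le j} q_{\sigma(i)}$, which makes the swap analysis a one-liner (only one prefix product changes). Both routes are standard for this sequential-testing problem and lead to the same conclusion.

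One slip to fix: in your swap step you state that the swap cannot increase $\expect{}{N_\sigma}$ precisely when $q_a \le q_b$, but the change in the objective is $Qq_b - Qq_a$, so the swap is non-increasing exactly when $q_b \le q_a$. In words, it helps to move the \emph{smaller} $q$ forward, which is what gives the nondecreasing-$q$ order you (correctly) conclude with. With that inequality flipped, the argument is complete.
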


\begin{proof}
Given a path $\path$, a sequence of edges \\$S = \{e_1, e_2, \ldots, e_n\}$ belonging to the path,
and the corresponding priors of the edges being valid $(p_1, p_2, \ldots, p_n)$, 
let the expected number of edge evaluations to invalidate $\path$ be $\evalEdges(S)$ which is given by
\begin{equation}
\begin{aligned}
\expect{\vectorp}{ \evalEdges(S) } &= (1-p_1) + 2 p_1 (1-p_2) + \ldots \\
&= \sum_{l=1}^{n} \left( \prod_{m=1}^{l-1} p_m \right) \left(1-p_l\right)l 
\end{aligned}
\label{eqn:greedy_expected_edge_evaluations}
\end{equation}
Without loss of generality, let $p_i > p_{i+1}$ for a given $i$. Consider the alternate sequence of evaluations \\$S' = \{e_1, e_2, \ldots, e_{i+1}, e_i \ldots, e_n\}$ where the positions of the edges $e_i,~e_{i+1}$ are swapped. 
Consider the difference:
\begin{equation}
\begin{aligned}
& \expect{\vectorp}{ \evalEdges(S) } - \expect{\vectorp}{ \evalEdges(S') } \\
&= \ldots + \prod_{m=1}^{i-1} p_m \left[ (1-p_i) i + p_i(1-p_{i+1})(i+1) \right] + \ldots \\
&- \ldots + \prod_{m=1}^{i-1} p_m \left[ (1-p_{i+1}) i + p_{i+1}(1-p_{i})(i+1) \right] + \ldots \\
&= \prod_{m=1}^{i-1} p_m \left[-i(p_i - p_{i+1}) + (i+1)(p_i - p_{i+1}) \right] \\
&= \prod_{m=1}^{i-1} p_m (p_i - p_{i+1}) \\
& > 0
\end{aligned}
\label{eqn:greedy_expected_edge_evaluations}
\end{equation}
Since each such swap results in monotonic decrease in the objective, there exists an unique fixed point, i.e., the optimal sequence $S^*$ has $p_1 \leq p_2 \leq \ldots \leq p_n$.
\end{proof}


\section{The Bayesian Lazy Shortest Path Problem}
\label{sec:bayesian_sp}

In this section, we ask the question -- ``What are the minimal number of edge evaluations required to identify the shortest path under uncertainty?''. Formally, we define the Bayesian Lazy Shortest Path Problem \textemdash given a prior over world $P(\world)$, what is the minimal number of edges needed to be evaluated until we can certify with certainty that a given path is the shortest feasible path. There is an important distinction from the \lazysp paradigm \textemdash \emph{we do not need to evaluate every candidate shortest path in sequential order}. The prior $P(\world)$ maybe such that potential worlds $\world$ maybe ruled out by evaluating edges that do not necessarily lie on the shortest path. This can, in theory, give rise to algorithms that with very little evaluation collapse posterior over worlds such that a particular path $\xi$ can be claimed as the shortest feasible. 

We adopt a treatment similar to \citet{choudhury2018bayesian,choudhury2017active} where we show that the problem is equivalent to a problem in Bayesian Active Learning. While the problem is NP-Hard, we show that it is adaptive submodular which we leverage to derive greedy, near-optimal policies. \figref{fig:bayesian_lazy_sp} presents an overview of this connection. While the algorithm is simple to implement, it suffers from scalability as it requires explicit enumeration of all possible paths. Nonetheless, it serves as an important theoretical result and intuition pump for comparing efficacy of various \selector s in \lazysp.

\begin{figure*}[!t]
    \centering
    \includegraphics[width=0.9\textwidth]{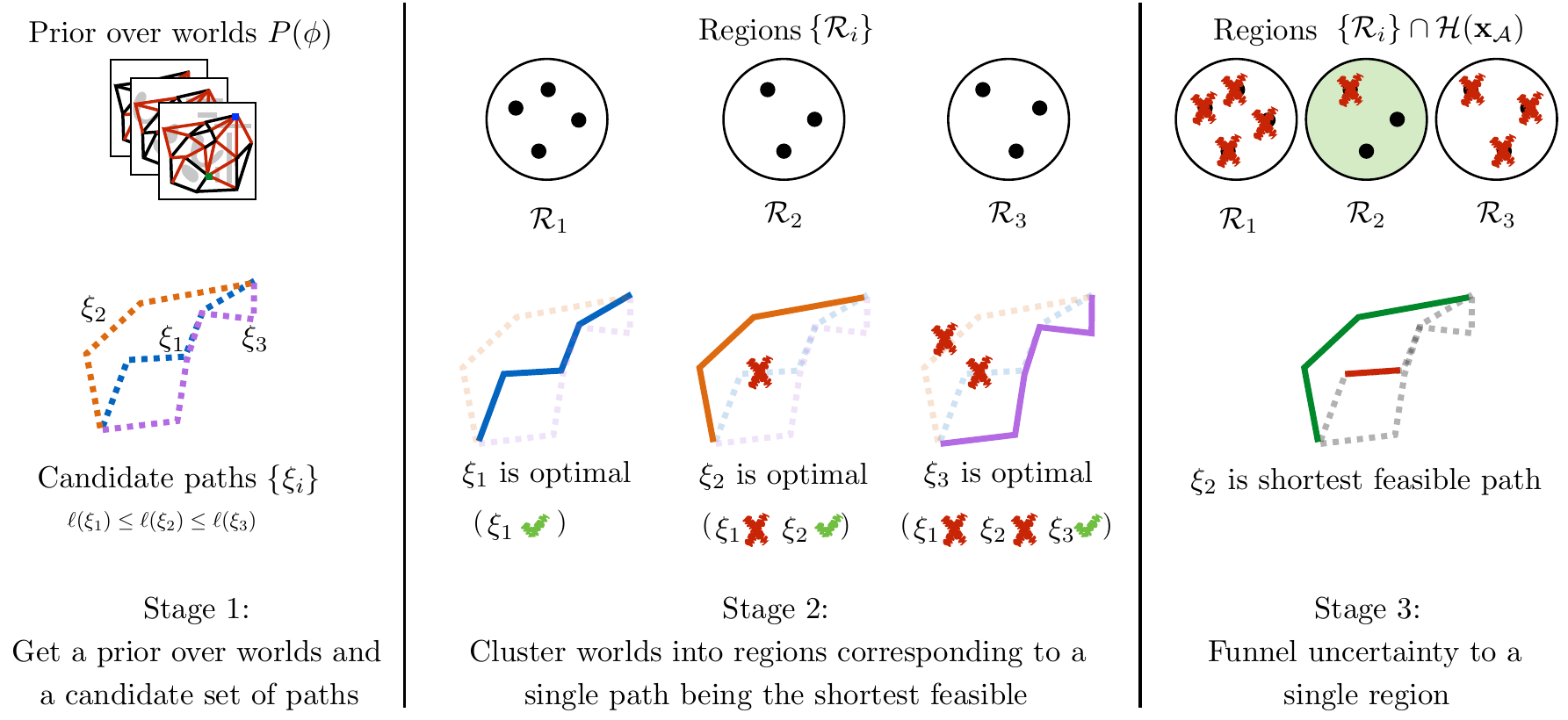}
    \caption{ Overview of mapping the Bayesian Lazy Shortest Path to Decision Region Determination. Stage 1 illustrates inputs to Problem~\ref{prob:blsp}. Stage 2 illustrates the mapping in Section~\ref{sec:drd_mapping}. Stage 3 illustrates the algorithm Algorithm~\ref{alg:bayesian_lsp}}
  \label{fig:bayesian_lazy_sp}
\end{figure*}%

\subsection{Preliminaries: Decision Region Determination (DRD)}

We first describe the problem of Bayesian Decision Region Determination (DRD). Given a set of hypotheses, a set of tests and a set of \emph{regions} (cluster of hypotheses), the goal is to perform a minimal set of tests to identify a region of hypotheses where the true hypothesis belongs\footnote{As opposed to the problem of optimal decision tree (ODT), where the true hypothesis must be identified.}. We formalize this below.

Let $\hypSet = \set{\hyp_1, \cdots, \hyp_\numHyp}$ be a set of candidate hypotheses, only one of which is true. We have a prior distribution $P(\hyp)$. Let $\testSet = \set{\test_1, \cdots, \test_\numTest}$ be a set of tests. Running a test $\test \in \testSet$ returns a binary outcome $\outcome_\test \in \{0, 1\}$ depending on the underlying hypothesis. Thus each hypothesis can be considered as a mapping from tests to outcomes $\hyp: \testSet \rightarrow \{0, 1\}$. The cost of performing a test is $\cost(\test)$\footnote{If we are only interested in minimizing the number of tests, then $\cost(\test)=1$ for all $\test$}. 

Let a region $\region \subseteq \hypSet$ be a set of a hypotheses. We will use $\set{\region_i}_{i=1}^{\numRegion}$ to denote the a set of regions. 

For a set of tests $\selTestSet \subseteq \testSet$ that are performed, let the observed outcome vector be denoted by $\obsOutcome$. Let the version space $\versionSpace(\obsOutcome)$ be the set of hypotheses consistent with outcome $\obsOutcome$, i.e. $\versionSpace(\obsOutcome) = \setst{\hyp \in \hypSet}{ \forall \test \in \selTestSet, \hyp(\test) = \obsOutcome(\test)}$. We assume that at least one hypothesis is true, i.e. $\abs{\versionSpace(\outcomeSet{\testSet})} \geq 1$. 

We define a policy $\policy$ as a mapping from the current outcome vector $\obsOutcome$ to the next test to select. A policy terminates when at least one region is valid, or all regions are invalid. Let $\hyp$ be the underlying hypothesis on which it is evaluated. Denote the outcome vector of a policy $\policy$ as $\obsOutcomeFunc{\policy}{\hyp}$. The expected cost of a policy $\policy$ is $\cost(\policy) = \expect{\hyp}{\cost(\obsOutcomeFunc{\policy}{\hyp}}$ where $\cost(\obsOutcome)$ is the cost of all tests $\test \in \selTestSet$. The objective is to compute a policy $\policyOpt$ with minimum cost such that all the uncertainty is funneled into one region, 
\begin{equation}
\label{eq:drd}
\policyOpt \in \argminprob{\policy} \;\cost(\policy) \; \mathrm{s.t} \; \forall \hyp , \exists \region_d \; : \; \versionSpace(\obsOutcome) \subseteq \region_d 
\end{equation}

Obtaining an approximate policy $\policy$ for which $\cost(\policy) \leq \cost(\policyOpt) . O(\log \numHyp)$ is NP-hard~\cite{chakaravarthy2007decision}. Fortunately, variants of DRD \cite{golovin2010near,javdani2014near} has been shown to have the property of adaptive submodularity~\cite{golovin2011adaptive}. This property implies that greedy policies are near-optimal. We harness this property in~\sref{sec:solution_drd}.

\subsection{Bayesian Lazy Shortest Path as a DRD problem}
\label{sec:drd_mapping}

We formally define the Bayesian Lazy Shortest Path problem, noting a distinction with the Optimal Selector problem Problem~\ref{prob:opt_select}. We are given a prior distribution of worlds, $P(\world)$. We are given a set of all candidate shortest paths $\{\path_i\}$. Let $\evalEdges(\world, \policy)$ denote the edges evaluated and outcomes by a policy $\policy$ on world $\world$. Let $\cost(\evalEdges(\world, \policy))$ denote the cost of edge evaluation. Let $P(\path_i \text{ is feasible} | \evalEdges(\world, \policy))$ be the posterior probability of a path being feasible. We then have the following problem:

\begin{problem}[Bayesian Lazy Shortest Path (SP)] \label{prob:blsp}
Minimize the cost of edge evaluation until a path is declared to be shortest feasible with probability $1$, i.e. it is deemed feasible and all shorter paths are deemed infeasible.
\begin{equation}
\begin{aligned}
  \min & \quad \expect{\world \sim P(\world)}{\cost{\evalEdges(\world, \policy)}} \\
  \text{s.t.} & \quad \exists \path^*: P(\path^* \text{ is feasible}| \evalEdges(\world, \policy)) = 1.0 \\
  & \quad \forall \path_i: ~\length(\path_i) \leq \length(\path^*),~P(\path_i \text{ is feasible}| \evalEdges(\world, \policy)) = 0.0
\end{aligned}
\end{equation}
\end{problem}

We now map this problem to Bayesian Decision Region Determination in Table~\ref{tab:equiv_problem}. Notably, each region consists of the number of worlds for which $\path_i$ is the shortest path. This mapping is also illustrated in \figref{fig:bayesian_lazy_sp}

\begin{table}[!htbp]
\centering
\caption{Mapping Bayesian Lazy Shortest Path to DRD}
\begin{tabulary}{\textwidth}{LCCCCCC}
\toprule
\textbf{Bayesian Lazy Shortest Path}	    & \textbf{Decision Region Determination} \\ \midrule
World ($\world$) & Hypothesis ($\hyp$) \\
Evaluate edge ($\edge$) & Execute test ($\test$) \\
Set of worlds for which path $\path_i$  & Region $\region_i \subset \hypSet$ \\
is shortest feasible & \\
Cost of evaluating edge ($\cost(\edge)$) & Cost of executing test ($\cost(\test)$) \\
\bottomrule
\end{tabulary}
\label{tab:equiv_problem}
\end{table}

\subsection{Efficienty solving DRD using the \ecsq algorithm}
\label{sec:solution_drd}

The DRD problem in \eref{eq:drd} has a special property -- \emph{the regions are disjoint}. \citet{golovin2010near} addresses this disjoint setting and propose a greedy algorithm, \ecsq, that is near-optimal. The idea behind \ecsq is elegant \textemdash define a graph with edges between hypotheses that we care to distinguish between. Tests `cut' edges inconsistent with outcomes, distinguishing between the two hypothesis. The aim is to cut all inconsistent edges with minimum expected incurred cost. We adopt this algorithm to solve the Bayesian Shortest Path Problem.

The \ecsq algorithm defines a graph $\graphDRD=(\vertexSetDRD, \edgeSetDRD)$, illustrated in \figref{fig:bayesian_lazy_sp}, where the nodes are hypotheses and edges are between hypotheses in \emph{different decision regions} $\edgeSetDRD = \cup_{i \neq j} \setst{ (\hyp, \hyp') }{\hyp \in \region_i, \hyp' \in \region_j}$. 

The weight of an edge is $\weight( (\hyp,\hyp')) = P(\hyp) P(\hyp')$. An edge is said to be `cut' by an observation if either hypothesis is inconsistent with the observation, i.e. weight is $0$. We can define the weight for all existing edges as a sum over individual weights $\weight(\edgeSetDRD) = \sum\limits_{\edge \in \edgeSetDRD} \weight(\edge)$. Notice that if we drive $\weight(\edgeSetDRD) \rightarrow 0$, we effectively disambiguate between hypothesis in different regions, i.e. we solve the DRD problem.  

We define $\weight(\{\region_i\})$ as the total weight of edges between regions. Since regions are disjoint, the total weight can be computed efficiently as 
\begin{equation}
\label{eq:weight}
\begin{aligned}
	\weight(\{\region_i\}) &= \frac{1}{2} \left( \sum\limits_{i \neq j} P(\region_i) P(\region_j) \right) \\ 
	&= \frac{1}{2} \left( ( \sum\limits_i P(\region_i) )^2 - \sum\limits_i P(\region_i)^2 \right)
\end{aligned}
\end{equation}

We can then define an objective function $\fec{\obsOutcome}$ that measures progress, i.e., how many edges have been cut by looking at the weight of the pruned regions to the original regions, i.e. 
\begin{equation}
\label{eq:ec2_fn}
\fec{\obsOutcome} = 1 - \frac{\weight(\{\region_i\} \cap \hypSet(\obsOutcome))}{\weight(\{\region_i\})}
\end{equation}

The objective function $\fec{\obsOutcome}$ is initially $0$ when no edges have been cut. When all the edges have been cut, i.e., $\weight(\{\region_i\} \cap \hypSet(\obsOutcome)) = 0$, the objective is $\fec{\obsOutcome}=1$. Our goal is to maximize this objective while incurring minimum cost of evaluating tests. We do so efficiently by observing that $\fec{}$ is \emph{adaptive submodular}:

\begin{lemma}
\label{lem:adapt_sub}
The objective function $\fec{}$ is strongly adaptive monotone and adaptive submodular
\end{lemma}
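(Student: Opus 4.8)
The plan is to reduce both properties to the corresponding facts about the edge‑cutting objective of the $\ecsq$ algorithm~\citep{golovin2010near}, which $\fec{\cdot}$ agrees with up to a positive affine transformation. Write $g(\obsOutcome) = \weight(\set{\region_i}) - \weight(\set{\region_i}\cap\hypSet(\obsOutcome))$ for the total weight of cut inter‑region edges; since $\fec{\obsOutcome} = g(\obsOutcome)/\weight(\set{\region_i})$ with $\weight(\set{\region_i})>0$ constant, and both strong adaptive monotonicity and adaptive submodularity are preserved under positive affine maps, it suffices to prove the claim for $g$. The structural fact we exploit throughout, inherited from Table~\ref{tab:equiv_problem}, is that edge evaluation is a \emph{noiseless} test --- the outcome is a deterministic function of the world $\world$ --- so each hypothesis is either consistent or inconsistent with an observation and the version space $\hypSet(\obsOutcome)$ can only shrink as observations accumulate.

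Strong adaptive monotonicity is the easy half. Appending any consistent observation $(\test,x)$ replaces $\hypSet(\obsOutcome)$ by a subset, hence shrinks $\region_i\cap\hypSet(\obsOutcome)$ for every region $i$, hence does not increase $\prob{\region_i\cap\hypSet(\obsOutcome)}$. Expanding $\weight(\set{\region_i}\cap\hypSet(\obsOutcome))$ as in \eref{eq:weight} with $\prob{\region_i}$ replaced by $\prob{\region_i\cap\hypSet(\obsOutcome)}$, this quantity is a sum of products of non‑negative, non‑increasing terms, hence non‑increasing; therefore $g$ is non‑decreasing for \emph{every} possible outcome, not merely in expectation, which is exactly the ``strongly'' adaptive monotone requirement.

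The real work is adaptive submodularity: for every test $\test$ and every outcome vector $\obsOutcome$ together with a consistent extension $\obsOutcome'$ of it, we must show $\gain{\test}{\obsOutcome} \ge \gain{\test}{\obsOutcome'}$. I would first derive a closed form for $\gain{\test}{\obsOutcome}$. Conditioning on $\obsOutcome$, running $\test$ splits $\hypSet(\obsOutcome)$ into the two sub‑version‑spaces consistent with its two outcomes, $\hypSet_0$ and $\hypSet_1$, realized with probabilities proportional to their masses; writing $p=\prob{\hypSet(\obsOutcome)}$, $p^{(b)}=\prob{\hypSet_b}$ and $p_i^{(b)}=\prob{\region_i\cap\hypSet_b}$ and substituting into \eref{eq:weight}, the cubic terms collapse --- on the ``global'' part one uses $p^3-(p^{(0)})^3-(p^{(1)})^3 = 3p^{(0)}p^{(1)}p$, and the region terms simplify analogously --- leaving $\gain{\test}{\obsOutcome}$ as an explicit low‑degree polynomial in the masses $p^{(b)}$ and $p_i^{(b)}$. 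The remaining step is to check that this polynomial can only decrease when the version space is further restricted by the additional observations in $\obsOutcome'$.

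I expect this monotonicity‑of‑the‑gain step to be the main obstacle. The tempting shortcut --- decomposing $g$ edge‑by‑edge as $\sum_{\edge}\weight(\edge)\,\Ind[\edge\text{ cut}]$ and arguing each summand is adaptive submodular --- \emph{fails}: a single edge‑cut function need not be adaptive submodular, because ruling out a hypothesis that agrees with both endpoints of $\edge$ on the test $\test$ strictly \emph{raises} the conditional probability that $\test$ cuts $\edge$. Diminishing returns is a property of the aggregate: eliminating a hypothesis cuts \emph{every} incident edge simultaneously, and the gains on those incident edges compensate for the anomaly on any one of them. Consequently the argument must carry the coupling across edges, either through the closed‑form computation sketched above or --- the route I would take in the final write‑up --- by invoking directly the adaptive submodularity theorem of~\citet{golovin2010near}, since via Table~\ref{tab:equiv_problem} our $\fec{\cdot}$ is a verbatim instance of the $\ecsq$ objective in the disjoint‑regions, noiseless‑test regime.
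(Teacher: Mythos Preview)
Your proposal is correct and arrives at exactly the paper's proof: the paper's entire argument is the single sentence ``See proof of Proposition~2 in \citet{golovin2010near}'', which is the citation you reach after your (sound) affine reduction and the disjoint-regions/noiseless-tests instantiation via Table~\ref{tab:equiv_problem}. Your self-contained strong adaptive monotonicity argument and the cautionary note that the per-edge decomposition fails for adaptive submodularity are both correct, but they are extras the paper itself does not supply.
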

\begin{proof}
See proof of Proposition 2 in \citet{golovin2010near}
\end{proof}

The property above suggest a powerful technique \textemdash \emph{ greedily maximizing $\fec{}$ is near-optimal}~\citep{golovin2011adaptive}. \ecsq does just this. First note that given a test $\test$, we can compute the expected marginal gain of a test as 
\begin{equation}
	\gain{\test}{\obsOutcome} = \expect{\outcomeTest{\test}}{ \fec{ \obsOutcomeAdd{\test} } - \fec{\obsOutcome}}
\end{equation}

A greedy policy $\policyEC$ selects a test $\test^* \in \argmaxprob{\test} \frac{\gain{\test}{\obsOutcome}}{\cost(\test)}$. Expanding this we get:
\begin{equation}
\label{eq:greedy_policy}
\begin{aligned}
	&\argmaxprob{\test} \frac{1}{\cost(\test)} \expect{\outcomeTest{\test}}{ \fec{ \obsOutcomeAdd{\test} } - \fec{\obsOutcome}} \\
					   &= \frac{1}{\cost(\test)} \expect{\outcomeTest{\test}}{ \frac{\weight(\{\region_i\} \cap \hypSet(\obsOutcome)) - \weight(\{\region_i\} \cap \hypSet(\obsOutcomeAdd{\test}))}{\weight(\{\region_i\})} } \\
   					   &= \frac{1}{\cost(\test)} \expect{\outcomeTest{\test}}{ \weight(\{\region_i\} \cap \hypSet(\obsOutcome)) - \weight(\{\region_i\} \cap \hypSet(\obsOutcomeAdd{\test}))} \\
\end{aligned}	
\end{equation}
\begin{theorem}
\label{lem:near_opt}
The greedy policy $\policyEC$ is near-optimal, i.e.
\begin{equation}
	\cost(\policyEC) \leq (2 \log (1 / p_{\rm{min}})) \cost(\policyOpt)
\end{equation}
where $p_{\rm{min}} = \min_{\hyp \in \hypSet} P(\hyp)$ is the minimum prior probability for any hypothesis and $\policyOpt$ is the optimal policy for the DRD problem.
\end{theorem}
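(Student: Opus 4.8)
The plan is to obtain this as a direct instantiation of the general near-optimality guarantee for \emph{adaptive stochastic minimum-cost coverage}~\citep{golovin2011adaptive}, applied to the \ecsq objective $\fec{}$. Three things need to be checked: (i) that the stopping rule of $\policyEC$ is exactly ``drive $\fec{}$ to its maximum value''; (ii) that $\fec{}$ satisfies the structural hypotheses of the coverage theorem; and (iii) that the instance-dependent constants $Q$ and $\eta$ in that theorem evaluate to what is needed to produce the factor $2\log(1/p_{\rm{min}})$.

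For (i): by \eref{eq:ec2_fn}, $\fec{\obsOutcome}$ attains its maximum value $1$ precisely when $\weight(\{\region_i\}\cap\hypSet(\obsOutcome)) = 0$, i.e. no edge of $\graphDRD$ survives. Since $\graphDRD$ contains an edge between every pair of still-consistent hypotheses lying in \emph{different} regions and the regions are disjoint, this is equivalent to $\versionSpace(\obsOutcome) \subseteq \region_d$ for some $d$ --- exactly the DRD termination condition of \eref{eq:drd}. Hence running the greedy policy until $\fec{} = 1$ solves the DRD problem, and the coverage value is $Q = 1$, attained on every realization.

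For (ii): invoke Lemma~\ref{lem:adapt_sub}, which states that $\fec{}$ is strongly adaptive monotone and adaptive submodular with respect to the prior over worlds/hypotheses. These are exactly the hypotheses of the adaptive minimum-cost coverage theorem, which then guarantees that any policy greedily maximizing the benefit--cost ratio $\gain{\test}{\obsOutcome}/\cost(\test)$ --- i.e. $\policyEC$, cf.~\eref{eq:greedy_policy} --- satisfies $\cost(\policyEC) \le \left(\ln(Q/\eta) + 1\right)\cost(\policyOpt)$, where $\eta$ is a lower bound on the smallest strictly positive one-step increment of $\fec{}$ over all partial observations and realizations.

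For (iii) --- the step needing the most care --- I would lower-bound $\eta$. Any test that makes progress removes at least one surviving edge of $\graphDRD$, and the lightest edge has weight $\weight((\hyp,\hyp')) = P(\hyp)P(\hyp') \ge p_{\rm{min}}^2$; so a single productive test decreases $\weight(\{\region_i\}\cap\hypSet(\obsOutcome))$ by at least $p_{\rm{min}}^2$, hence by \eref{eq:ec2_fn} raises $\fec{}$ by at least $p_{\rm{min}}^2/\weight(\{\region_i\})$. By \eref{eq:weight}, $\weight(\{\region_i\}) = \tfrac12\!\left((\sum_i P(\region_i))^2 - \sum_i P(\region_i)^2\right) \le \tfrac12$, so $\eta \ge 2p_{\rm{min}}^2$. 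Plugging $Q = 1$ and $\eta \ge 2p_{\rm{min}}^2$ into the coverage bound gives $\cost(\policyEC) \le \left(\ln\tfrac{1}{2p_{\rm{min}}^2} + 1\right)\cost(\policyOpt) = \left(2\ln\tfrac{1}{p_{\rm{min}}} + 1 - \ln 2\right)\cost(\policyOpt) \le 2\log(1/p_{\rm{min}})\,\cost(\policyOpt)$ once the additive constant is absorbed (a tighter bookkeeping, following \citet{golovin2010near}, recovers the stated constant exactly). I expect the main obstacle to be precisely this constant-chasing: confirming that ``strongly adaptive monotone'' is used in the exact form the logarithmic coverage theorem needs, and identifying $\eta$ sharply enough that the exponent comes out as $2\log(1/p_{\rm{min}})$ rather than something larger.
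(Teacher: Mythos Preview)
Your proposal is correct and follows the same approach as the paper, which simply defers to Theorem~3 of \citet{golovin2010near}; you have effectively unpacked that citation by invoking the adaptive minimum-cost coverage guarantee of \citet{golovin2011adaptive} together with Lemma~\ref{lem:adapt_sub}. The constant-chasing caveat you flag is real but minor --- the stated factor $2\log(1/p_{\rm{min}})$ is exactly what \citet{golovin2010near} prove, so pointing back to them for the sharp constant is appropriate.
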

\begin{proof}
See proof of Theorem 3 in \citet{golovin2010near}
\end{proof}

\subsection{Algorithm}
We are now ready to apply the mapping in Section~\ref{sec:drd_mapping} to derive a greedy, near-optimal policy for evaluating edges until the shortest path is found. Algorithm.~\ref{alg:bayesian_lsp} describes the algorithm. It takes as input a set of candidate paths $\{\xi_i\}$and a prior over worlds $P(\world)$. It then clusters the worlds according to regions $\region_i$ such that all such worlds correspond to $\path_i$ to be the shortest feasible path. We define two lambda functions. First, computes a posterior over region $P(\region_i | \obsOutcome)$ given a set of edge evaluation outcomes $\obsOutcome$. Secondly, compute a weight function $\weight(\{\region_i\} \cap \hypSet(\obsOutcome))$ using \eqref{eq:weight}. This is the weight of all remaining edges that need to be cut. It then iteratively evaluates edges. In each iteration, it greedily selects an edge using \eqref{eq:greedy_policy}. The edge is evaluated, outcome is received and $\obsOutcome$ updated. If the uncertainty over worlds has been funneled into a single region $\region_i$, then we know for certain the shortest feasible path is $\xi_i$. The iteration terminates and this path is returned. 

\begin{algorithm}[tb]
\SetAlgoLined
\caption{Bayesian Lazy Shortest Path \label{alg:bayesian_lsp}}
\SetKwInOut{Input}{Input}
\SetKwInOut{Parameter}{Parameter}
\SetKwInOut{Output}{Output}
\Input{$\text{Set of candidate paths } \{\xi_i\}, \text{ prior over worlds } P(\world)$}
\Output{$\text{Shortest feasible path}\ \path^*$}
\vspace{2mm}
Create regions $\region_i = \setst{\world}{ \xi_i \text{ shortest feasible path in } \world}$  \\
Create function to compute posterior over regions $P(\region_i | \obsOutcome) \gets \sum_{\world \in \region_i} P(\world | \world \text { consistent with } \obsOutcome)$ \\
Create function to compute weights \eqref{eq:weight} $\weight(\{\region_i\} \cap \hypSet(\obsOutcome)) \gets \frac{1}{2} \left( ( \sum\limits_i P(\region_i | \obsOutcome) )^2 - \sum\limits_i P(\region_i | \obsOutcome)^2 \right)$ \\
Initialize list of (edges evaluated, outcome) $\obsOutcome \gets \emptyset$ \\
\vspace{1mm}
\Repeat{shortest feasible path found, i.e. $\exists \region_i, \hypSet(\obsOutcome) \subseteq \region_i$}
{
Select edge $\edge^*$ that maximizes expected marginal gain \eqref{eq:greedy_policy} $\argmaxprob{\edge}\; \expect{\outcomeTest{\edge}}{\frac{ \weight(\{\region_i\} \cap \hypSet(\obsOutcome)) - \weight(\{\region_i\} \cap \hypSet(\obsOutcomeAdd{\edge}))}{\cost(\edge)}}$
\\
Evaluate $\edge^*$ and observe outcome $\outcomeTest{\edge^*}$ \\
Add $\obsOutcome \gets \obsOutcome \cup (\edge^*, \outcomeTest{\edge^*})$
}
\KwRet\ \{$\path^* \gets \path_i$\};
\end{algorithm}

\subsection{Practical Challenges}

Practically, implementing Algorithm.~\ref{alg:bayesian_lsp} is difficult due to a number of challenges:
\begin{enumerate}
\item It requires enumerating the full set of candidate paths $\{\xi_i\}$ which can be $O(E!)$. 
\item At each iteration, the complexity is $O(\abs{\region}\abs{\world})$, i.e. the number of paths times the number of worlds, which makes computation quite expensive.
\item It assumes realizability of the world at test time.
\end{enumerate}

Depsite these shortcomings, it offers a very clean, analytic policy that could potentially be used if one were dealing with a small number of candidate paths. We leave exploring such algorithms for future work.

\section{Experiments}
\label{sec:experiments}

\subsection{Experimental Setup}
\label{sec:experiments:setup}

We use datasets from \cite{choudhury2017active} in our experiments. The 2D datasets contain graphs with approximately 1600-5000 edges and varied obstacle distributions. The two 7D datasets involve a robot arm planning for a reaching task in clutter with large graphs containing 33286 edges.

\begin{figure*}[!t]
\centering
    \includegraphics[width=\textwidth]{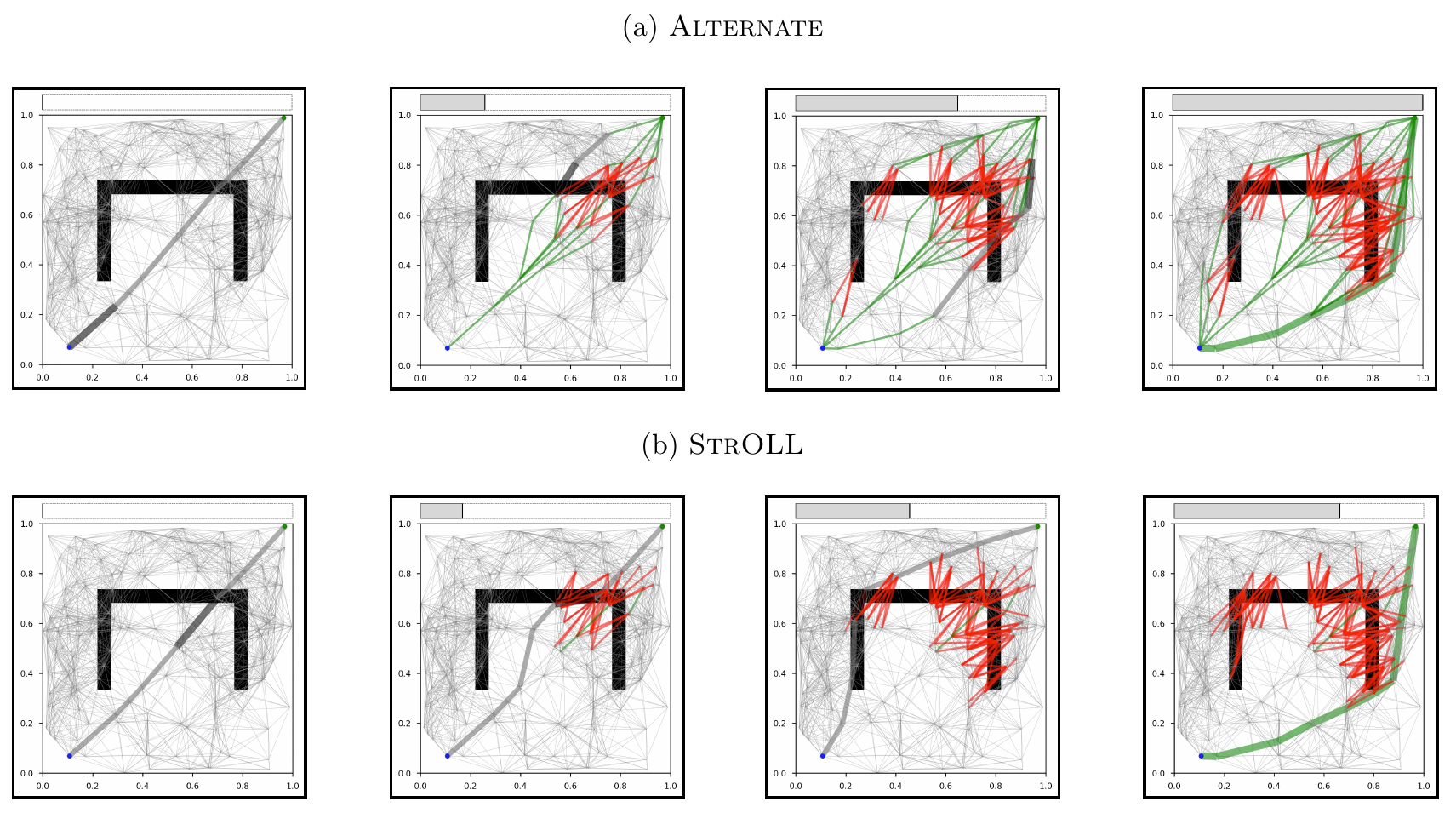}
\caption{ Snapshots of search progress using uniformed versus learned selectors on a \textsc{bugtrap} environment. The thicker gray edges depict the candidate shortest path with current selected edge in dark gray. Edges evaluated to be valid are shown in green and invalid edges are in red. The uninformed \selectorAlternate selector evaluates several valid edges thus wasting search effort whereas \algName focuses on edges more likely in collision thus leading to fewer overall edge evaluations as depicted in the top progress bar. \fullFigGap}
\label{fig:filmstrip_comparison} 
\end{figure*}

\begin{figure}[!t]
\centering
    \includegraphics[width=\columnwidth]{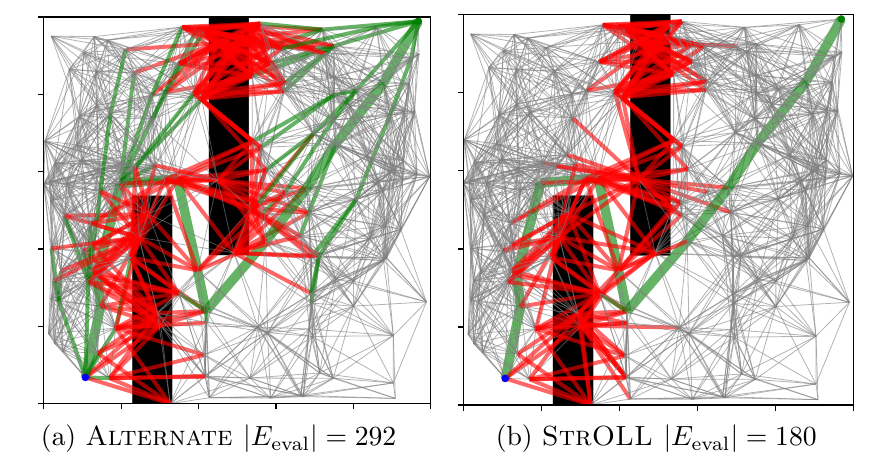}
\caption{Edges evaluated (green valid, red invalid) on a world from \textsc{Baffle}. (a) \selectorAlternate evaluates several valid edges (b) \algName evaluates many fewer edges, all of which are invalid and eliminate a large number of paths. \fullFigGap}
 \label{fig:edge_expand}
 \vspace{1mm}
\end{figure}

\subsubsection*{Learning Details}
We only consider policies that are a linear combination of a minimal set of features, where each feature is a different motion planning heuristic. The features we consider are: 
\begin{enumerate}
    \item \featPrior - the prior probability of an edge being invalid calculated over the training dataset.
    \item \featPost - the posterior probability of an edge being invalid given collision checks done thus far (described below)
    \item \featIndex - score ranging from 1 (first unchecked edge) to 0 (last unchecked edge).
    \item \featDeltaLength - hallucinate that an edge is invalid, then calculate the difference in length of new shortest path compared with the current shortest path.
    \item \featDeltaEval - hallucinate that an edge is invalid, the calculate the fraction of unevaluated edges on the new shortest path.
    \item \featPDL - calculated as \featPost $\times$ \featDeltaLength, it weighs the \featDeltaLength of an edge with the probability of it being invalid and is effective in practice (Table \ref{tab:benchmark_results}).
\end{enumerate}  

\textbf{\featPost Selector: }
We define the posterior selector used in a manner similar to $\selectorFailFast$ from~\sref{sec:analysis:bern} 
\begin{equation}
  \selectorPostFailFast \; \equiv \; \argmin_{\edge \in \path} \vectorp(\edge|\state_t)
\end{equation}
where $\state_t$ is the state of search at time t. We aproximate the posterior using the training dataset of $N$ worlds similar to \citep{choudhury2017active,choudhury2018bayesian} as follows - for each training world $\world_i$ a score $z_{i}$ is calculated based on the discrepancy between $s_t$ and the $s_{ti}$ where the latter is what the state of the search would be if agent were operating in $\world_i$, i.e
\begin{equation*}
z_{i} = -|s_t - s_{ti}|
\end{equation*}  
where, the difference follows directly from defintion in~\sref{sec:problem_formulation:mdp}. The probability for $\phi_i$ is then given by a softmax over training worlds,
\begin{equation*}
P(\world_i | \state_t) = \frac{e^{z_{i}}}{\sum_{\substack{k=0}}^{N} e^{z_k}}
\end{equation*}
Then, for every $\edge \in \path$,
\begin{equation}
\vectorp(\edge|\state_t) = \sum_{k=0}^{N} P(\world_k | \state_t) \world_i(\edge)
\end{equation}

\subsection{Baselines}
We compare our approach to common heuristics used in \lazysp as described in Section \ref{sec:problem_formulation:lazysp}. We also analyze the improvement in performance as compared to vanilla behavior cloning of the oracle and reinforcement learning from scratch. 

\subsection{Analysis of Overall Performance}
\begin{observation}
\algName has consistently strong performance across different datasets. 
\end{observation}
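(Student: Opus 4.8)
The plan is to support this observation empirically rather than by a formal argument, since it is a claim about measured performance. First I would fix a single evaluation metric --- the number of edges evaluated $\abs{\evalEdges}$ needed to return the certified shortest feasible path (equivalently, minus the episodic reward of the MDP in \sref{sec:problem_formulation:mdp}), suitably normalized per dataset --- and report it on a held-out set of worlds drawn from each dataset's $P(\world)$, for all the 2D environments and both large 7D arm-planning graphs. On the same held-out worlds I would run every \lazysp baseline of \sref{sec:problem_formulation:lazysp} (\selectorForward, \selectorBackward, \selectorAlternate, \selectorFailFast, \selectorPostFailFast, \selectorRandom) together with the ablations named in the Baselines subsection (behaviour cloning of the approximate oracle without interaction, and \qlearningAlg from scratch), and finally both variants of our method, \algName and \algHeuristic, so that every entry of Table~\ref{tab:benchmark_results} is a like-for-like comparison.

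Second, I would argue the ``consistently'' part structurally, using facts already established earlier. Each uninformed heuristic is implicitly tuned to one kind of world structure --- \selectorForward and \selectorBackward to where invalid edges sit relative to start and goal, \selectorAlternate to alternating bottlenecks (cf.\ Environment 2 in \sref{sec:challenge}), and \selectorFailFast to independent Bernoulli edges (provably optimal there by Lemma~\ref{lemma:fail_fast}) --- so on any fixed dataset at least one baseline does well, but no single baseline does well everywhere. Since our feature set (\featPrior, \featPost, \featIndex, \featDeltaLength, \featDeltaEval, \featPDL) contains precisely the statistics these heuristics compute, the linear \algName policy class \emph{realizes} each of them, and via \daggerAlg it selects the combination that the clairvoyant oracle's state distribution rewards on the dataset at hand. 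The expected consequence is that on each dataset \algName matches or beats the best baseline \emph{for that dataset}, hence it is the only method whose rank is uniformly near the top of the benchmark; I would display this with a per-dataset comparison plus an aggregate ranking column, and use \figref{fig:filmstrip_comparison} and \figref{fig:edge_expand} for the qualitative intuition that the learned selector concentrates evaluations on edges both likely in collision and capable of eliminating many paths.

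Third, I would state the claim conservatively: report confidence intervals over the held-out worlds, and where a baseline is within noise of \algName on some dataset say so explicitly; the observation only requires that \algName is never substantially worse than the best competitor and is strictly better on several datasets, notably the 7D graphs where the right heuristic is least obvious a priori. I expect the main obstacle to be exactly the datasets where a hand-tuned baseline is already near-optimal --- e.g.\ a $P(\world)$ close to independent Bernoulli, where \selectorFailFast is optimal by Lemma~\ref{lemma:fail_fast} --- since there \algName can only tie, and with finite training data and a restricted linear policy class it could even fall slightly short; the argument must then lean on the aggregate picture and on \algHeuristic's heuristic roll-in, which stabilizes learning and rules out catastrophic failures when the oracle is far from realizable. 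A secondary obstacle is making sure the posterior approximation behind \featPost and \selectorPostFailFast is computed identically for the baseline and for the feature, so that any gain is attributable to the learned combination and not to an implementation discrepancy.
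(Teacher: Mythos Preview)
Your overall plan---support the observation empirically by reporting $\abs{\evalEdges}$ on held-out worlds across all datasets and comparing against the \lazysp baselines---is exactly what the paper does; its entire justification is a two-sentence pointer to Table~\ref{tab:benchmark_results}. So the high-level approach is right.

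Where you go wrong is in the strength of the claim you expect the data to support. You predict that ``on each dataset \algName matches or beats the best baseline \emph{for that dataset},'' and you single out the 7D graphs as where \algName should be ``strictly better.'' The paper's own numbers contradict both. On \textsc{TwoWall} and \textsc{Forest} the \selectorPDL heuristic beats \algName by a clear margin; on both 7D \textsc{Clutter} datasets \algName loses to \selectorBackward and \selectorPDL (the paper even explains why: obstacles concentrate near the goal, and with large graphs and limited training the \featPost and \featDeltaLength features are inaccurate). The observation as stated---and as the paper defends it---is the weaker claim that \algName is \emph{competitive} everywhere and that \emph{no single competing heuristic} is as consistent across datasets. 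Your realizability argument in paragraph two is a plausible intuition, but it overreaches: containing each heuristic as a special case does not guarantee the \daggerAlg objective selects the right combination when the oracle is far from realizable, and the paper explicitly records failure modes (Observation~4) rather than claiming uniform dominance.

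Two smaller mismatches: the paper's benchmark table does not include \selectorForward or \selectorRandom, so your proposed comparison set differs from what is actually reported; and the paper does not present an aggregate ranking column---it lets the reader scan the highlighted cells. If you revise, frame the claim as ``near the top everywhere, never catastrophically bad, and no baseline matches that consistency,'' and be prepared for the 7D rows to be the weak spot rather than the showcase.
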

Table \ref{tab:benchmark_results} shows that \algName is able to learn policies competitive with other motion planning heuristics. No other heuristic has as consistent a performance across datasets.

\begin{observation}
The learner focuses collision checking on edges that are highly likely to be invalid and have a high measure of centrality.  
\end{observation}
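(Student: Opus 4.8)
This final claim is an empirical observation about the structure of the policy that \algName recovers, rather than a theorem, so the plan is to justify it by exploiting the fact that the learned selector is a \emph{linear} combination of the interpretable features listed in \sref{sec:experiments:setup}. Among these, \featPrior and \featPost directly score how \emph{likely an edge is to be invalid}, while \featDeltaLength and \featDeltaEval score how disruptive invalidating an edge is to the remaining search --- an edge whose hallucinated removal sharply increases the shortest-path length, or forces many fresh edges onto the next candidate path, is precisely a bottleneck through which many short paths are routed. The composite feature \featPDL multiplies these two notions together. First I would report the learned weight vector on each dataset and show that it places substantial positive weight simultaneously on a representative of the invalidity group (\featPost) and on a representative of the centrality group (\featDeltaLength, or the combined \featPDL), which is the weight-level signature of the stated observation.

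Quantitatively, I would then corroborate this at the level of the edges actually evaluated. For a held-out collection of worlds drawn from $P(\world)$, I would log every edge that the learned selector queries and compute two summary statistics per edge: its empirical invalidity rate under the posterior, and a centrality surrogate (for instance \featDeltaLength, or the raw count of candidate shortest paths passing through it). The key comparison is against the same statistics for edges selected by the uninformed baselines \selectorForward and \selectorAlternate, and against the full population of path edges. The observation is supported if the learner's selected edges score high on \emph{both} axes at once --- not merely invalid (which \selectorPostFailFast already achieves) and not merely central --- whereas the baselines are high on at most one axis and frequently waste checks on valid, non-bottleneck edges.

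For qualitative support I would lean on the snapshots in \figref{fig:filmstrip_comparison} and \figref{fig:edge_expand}: there the learner's evaluated edges visibly concentrate at narrow passages and doorways and are overwhelmingly invalid, eliminating large numbers of paths per check, while \selectorAlternate spends its budget confirming valid edges far from any bottleneck. This ties the observation back to the ``Piano Mover's'' intuitions of \sref{sec:introduction} --- check doors first, and prioritize narrow doors --- giving a mechanistic explanation for why \algName evaluates fewer edges overall.

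The main obstacle is that ``high measure of centrality'' is never formally defined, and the policy never sees a graph-theoretic centrality --- only the features. The argument therefore hinges on establishing that \featDeltaLength and \featDeltaEval are faithful surrogates for centrality, which I would do by noting that both are monotone in how many short, not-yet-eliminated paths an edge lies on --- the very quantity maximized by the greedy set-cover oracle of Theorem~\ref{thm:set_cover}. The honest caveat is that this yields correlational evidence --- large, consistently-signed feature weights plus matched per-edge statistics and visualizations --- rather than a dataset-independent guarantee; a fully rigorous version would require fixing a centrality definition and proving that maximizing the learned linear score provably trades both quantities off, which the linear-in-features restriction makes plausible but does not by itself certify.
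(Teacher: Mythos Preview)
Your first paragraph is exactly the paper's justification: the paper simply points to \figref{fig:weights_learner}, notes that the learner places high weight on \featPost, \featDeltaLength, and \featPDL, and interprets \featPost as an approximate invalidity likelihood and \featDeltaLength as an approximate centrality measure (with the explicit caveat that large \featDeltaLength implies many paths pass through the edge, though the converse may fail). That is the entirety of the paper's support for this observation. Your second and third paragraphs --- logging per-edge invalidity and centrality statistics on held-out worlds, contrasting them against \selectorForward/\selectorAlternate, and invoking \figref{fig:filmstrip_comparison} and \figref{fig:edge_expand} --- propose a substantially more thorough corroboration than the paper actually carries out; those snapshot figures are used in the paper to support the neighboring observation about correlated-edge datasets, not this one. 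So your plan is correct and subsumes the paper's argument, but only the weight-inspection step is needed to match it.
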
 
\figref{fig:weights_learner} shows the activation of different features across datasets. The learner places high importance on \featPost, \featDeltaLength and \featPDL. \featPost is an approximate likelihood of an edge being invalid and \featDeltaLength is an approximate measure of centrality i.e. edges with large \featDeltaLength have large number of paths passing through them (Note that the converse may not always apply).

\begin{observation}
On datasets with strong correlations among edges, heuristics that take obstacle distribution into account outperform uninformed heuristics, and \algName is able to learn significantly better policies than uninformed heuristics.
\end{observation}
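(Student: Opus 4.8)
The plan is to establish this observation the way an experimental claim of this kind is established — by a controlled comparison on the benchmark of \sref{sec:experiments:setup}, with the structural result of \sref{sec:approach:oracle} and the regret decomposition of \sref{sec:analysis} supplying the explanatory mechanism. First I would partition the datasets into those whose obstacle geometry induces strong correlations among edge validities — e.g.\ environments like \textsc{bugtrap}, \textsc{baffle} and \textsc{maze}, where a single contiguous obstacle invalidates large, spatially coherent blocks of edges — versus datasets with scattered, near-independent obstacles that serve as a control. On the correlated group I would run the two families of selectors side by side: the uninformed baselines of \sref{sec:problem_formulation:lazysp} (\selectorForward, \selectorBackward, \selectorAlternate, \selectorRandom), which depend only on the current path and never consult $P(\world)$; the distribution-aware selectors (\selectorFailFast, \selectorPostFailFast, \selectorPDL); and \algName / \algHeuristic. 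The quantity measured is $\expect{\world \sim P(\world)}{\abs{\evalEdges(\world, \selector)}}$, equivalently the average validation reward under the MDP reward of \sref{sec:problem_formulation:mdp}, estimated on held-out worlds.

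The argument then splits into two halves, each tied to a result already in the paper. For ``informed outperforms uninformed,'' I would appeal to Theorem~\ref{thm:set_cover}: when edges are correlated, invalidating one well-chosen bottleneck edge collapses the posterior $P(\world \mid \state)$ onto a small set of candidate paths, so the greedy set-cover oracle of \eref{eq:greedy_set_cover} terminates after very few evaluations; a selector that ignores $P(\world)$ cannot preferentially target such edges and will, with constant probability over $P(\world)$, spend evaluations on valid edges — exactly the wasted effort visualized in \figref{fig:filmstrip_comparison} and \figref{fig:edge_expand}. For ``\algName outperforms uninformed by a large margin,'' I would combine this with the bound $\epsilon_{\rm hal} + \epsilon_{\rm app} + \epsilon_{\rm im}$ of \sref{sec:analysis}: on strongly correlated datasets $\epsilon_{\rm hal}$ is small because little active information gathering beyond the hallucinating/QMDP oracle is needed, $\epsilon_{\rm app}$ is controlled by the $\Delta$-length surrogate of Alg.~\ref{alg:approximate_oracle}, and $\epsilon_{\rm im}$ decays with \daggerAlg\ iterations, so \algName tracks a near-optimal oracle and inherits its advantage. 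A feature-activation analysis (\figref{fig:weights_learner}) would corroborate the mechanism, showing the learner places weight on \featPost (likelihood of invalidity) and \featDeltaLength / \featPDL (a proxy for centrality) — precisely the two criteria the oracle uses to pick edges that are both likely invalid and path-eliminating.

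The main obstacle is that the statement quantifies over a \emph{class} of datasets, not a single instance, so the justification must do two extra things. It must give a principled, reproducible criterion for ``strong correlation'' — I would adopt an empirical measure such as the mutual information (or the total-variation distance to the product of marginals) between validities of neighboring edges under $P(\world)$, and exhibit the performance gap widening monotonically as this measure grows, rather than asserting it for a hand-picked set of environments. And it must be honest about the weaker direction: \algName need not strictly dominate the \emph{best} distribution-aware heuristic on every correlated dataset; the defensible claim — and the one Table~\ref{tab:benchmark_results} is meant to support — is that \algName is never worse than the best such baseline up to the regret terms above, while \emph{every fixed} uninformed heuristic is beaten on the correlated datasets, which is the consistency story behind the earlier observations.
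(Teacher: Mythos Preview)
Your proposal is sound as a justification strategy but considerably more elaborate than what the paper actually does. The paper treats this as a pure empirical observation: it simply names the datasets \textsc{gate}, \textsc{baffle}, \textsc{bugtrap}, and \textsc{blob} (your list substitutes \textsc{maze} for \textsc{gate} and \textsc{blob}), points to Table~\ref{tab:benchmark_results} and Figs.~\ref{fig:filmstrip_comparison}--\ref{fig:edge_expand}, and offers a one-sentence mechanistic remark that \algName and \algHeuristic eliminate many paths by evaluating edges that are both likely in collision and traversed by many candidate paths. There is no formal correlation metric, no explicit invocation of Theorem~\ref{thm:set_cover} or the regret decomposition of \sref{sec:analysis}, and no monotone-gap-versus-correlation experiment.

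Your additions --- a mutual-information criterion for ``strong correlation,'' the appeal to the set-cover oracle and the $\epsilon_{\rm hal}+\epsilon_{\rm app}+\epsilon_{\rm im}$ bound, and the monotonicity check --- would make the claim reproducible in a way the paper's hand-picked list does not, and they correctly identify \featPost and \featDeltaLength/\featPDL as the mechanism (matching \figref{fig:weights_learner}). However, you miss a caveat the paper explicitly raises in its discussion of this observation: the 7D \textsc{Clutter} datasets also have strongly correlated edges (obstacles concentrated near the goal), yet there the \emph{uninformed} \selectorBackward wins, because its fixed bias happens to align with that structure while \featPost and \featDeltaLength become unreliable on large graphs with limited training data. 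This directly contradicts your closing claim that ``every fixed uninformed heuristic is beaten on the correlated datasets,'' and your proposed monotone-in-correlation plot would need to explain this outlier rather than assume it away.
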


Examples of such datasets are \textsc{gate},  \textsc{baffle}, \textsc{bugtrap} and \textsc{blob}. Here, \algName and \algHeuristic eliminate a large number of paths by only evaluating edges which are highly likely to be in collision and have several paths passing through them (Figs. \ref{fig:filmstrip_comparison}, \ref{fig:edge_expand}). In the 7D datasets, obstacles are highly concentrated near the goal region, which explains the strong performance of the uninformed \selectorBackward selector. However, due to a very large number of edges and limited training sets, \featPost and \featDeltaLength are inaccurate causing the learner to fail to outperform \selectorBackward.  

\begin{observation}
On datasets with uniformly spread obstacles, uninformed heuristics can perform better than \algName.
\end{observation}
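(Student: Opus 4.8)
The plan is to establish this observation empirically, in the same style as the preceding observations, and then to back it with a qualitative explanation rooted in the error decomposition of \sref{sec:analysis}. First I would isolate the subset of benchmark environments from \sref{sec:experiments:setup} in which obstacles are not concentrated in a structured region --- the maps with obstacles scattered roughly uniformly --- as opposed to the structured maps (\textsc{gate}, \textsc{baffle}, \textsc{bugtrap}, \textsc{blob}) already discussed. On these maps I would run \lazysp with every baseline selector of \sref{sec:problem_formulation:lazysp} together with \algName and \algHeuristic, and report $\expect{\world}{\abs{\evalEdges(\world, \selector)}}$ over a held-out set of worlds, as in Table~\ref{tab:benchmark_results}. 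The claim then reduces to exhibiting at least one such dataset where the best uninformed heuristic (typically \selectorAlternate or \selectorForward) attains a strictly smaller expected number of edge evaluations than the learned selector.

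Second, I would explain why this is expected rather than anomalous. When obstacles are spread uniformly, the prior marginal $\vectorp(\edge)$ is nearly constant across edges on any candidate path, so \featPrior and \featPost carry little discriminative signal; likewise, with no bottleneck structure, \featDeltaLength and \featPDL are roughly flat, since invalidating any one edge reroutes the shortest path by a comparable amount. Consequently the clairvoyant oracle of \sref{sec:approach:oracle} has essentially no advantage to transfer: the hallucinating-oracle headroom $\epsilon_{\rm hal}$ and the realizability gap of the linear policy class are both small in \emph{absolute} terms, but so is the margin over a simple hedging heuristic. In this regime the residual terms in the decomposition $\epsilon_{\rm hal} + \epsilon_{\rm app} + \epsilon_{\rm im}$ --- in particular the finite-sample classification error $\epsilon_{\rm class}$ inside $\epsilon_{\rm im}$ and the oracle approximation error $\epsilon_{\rm app}$ from the double approximation in \sref{sec:approach:approx_oracle} --- are no longer negligible relative to the (tiny) potential gain, so the learned linear combination of noisy features can be outperformed by a parameter-free heuristic that is robust precisely because it ignores the uninformative features.

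Third, to make the argument airtight I would add controls that rule out the trivial explanations: (i) verify that increasing the number of training worlds does not close the gap, which rules out pure sample complexity; and (ii) verify that the feature activations learned on these datasets (cf.\ \figref{fig:weights_learner}) collapse toward the \featIndex feature --- i.e.\ the learner itself ``discovers'' that it should behave like a positional heuristic --- which both explains the near-tie and confirms that no substantially better linear policy exists in the class.

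The main obstacle I anticipate is the negative-result nature of the claim: it is easy to show \algName loses on one dataset, but hard to argue this is \emph{intrinsic} to uniformly spread obstacles rather than an artifact of the particular feature set, the linear policy class, or limited training data. Pinning the blame correctly --- distinguishing ``there is nothing to learn'' (small oracle advantage, hence tiny $\epsilon_{\rm hal}$ headroom) from ``we failed to learn it'' (large $\epsilon_{\rm im}$) --- is the delicate step, and is why the control experiments above, rather than a single benchmark number, form the crux of the demonstration.
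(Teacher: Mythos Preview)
Your proposal is correct and takes essentially the same approach as the paper, only far more elaborately: the paper simply names \textsc{TwoWall} and \textsc{Forest} as the relevant datasets, points to Table~\ref{tab:benchmark_results}, and offers a two-sentence explanation that the lack of structure makes features such as \featPost uninformative and that the oracle is non-realizable. Your explicit tie to the $\epsilon_{\rm hal} + \epsilon_{\rm app} + \epsilon_{\rm im}$ decomposition and the proposed control experiments (training-set scaling, inspecting feature activations) go beyond what the paper actually does, though they are consistent with its analysis in \sref{sec:analysis} and \figref{fig:weights_learner}.
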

Examples of such datasets are \textsc{twowall} and \textsc{forest} where the lack of structures makes features such as posterior uninformative. This combined with the non-realizability of the oracle makes it difficult for \algName to learn a strong policy.

\begin{figure}[!t]
\centering
    \includegraphics[width=\columnwidth]{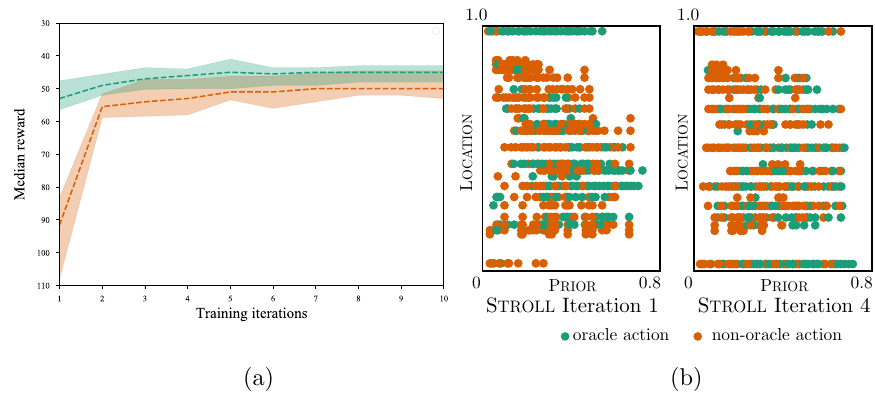}
\caption{ (a) \algHeuristic (green) vs \algName (orange) (b) Densification of data. }
    \label{fig:stroller_plots}
\end{figure}

\begin{figure}[!t]
    \centering
    \includegraphics[width=\columnwidth]{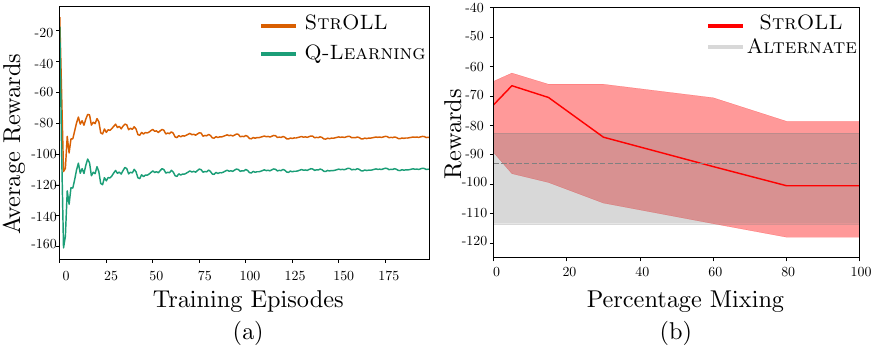}   
    \caption{(a) Running average reward for 200 episodes of training. Q learning suffers due to large state space and sparse rewards. (b) Performance on validation set of 200 worlds with contamination from different distribution. \vspace{-6mm}}
    \label{fig:q_learn_plot}
\end{figure}

{
	\renewcommand{\arraystretch}{1.5}
\begin{table*}[!t]
\small
\centering
\caption{ Edges evaluated by different algorithms across different datasets (median, upper and lower C.I on 200 held-out environments). Highlighted is the best performing selector in terms of median score not counting the oracle.}
\begin{tabulary}{\textwidth}{LCCCCCCCCC}\toprule
       & {\bf \selectorOracle}  & {\bf \selectorBackward}    & {\bf \selectorAlternate} & {\bf \selectorFailFast} & {\bf \selectorPostFailFast} & {\bf \selectorPDL}  &{\bf \supervisedAlg} & {\bf \algName} & {\bf \algHeuristic} \\ \midrule
\multicolumn{10}{c}{ {\bf 2D Geometric Planning} }   \\
\textsc{OneWall}    & $80.0^{+6.0}_{-48.0}$  & $87.0^{+8.4}_{-41}$   & $112.0^{+12.8}_{-60.0}$ &  $82.0^{+3.0}_{-47.0}$ & $81.0^{+3.0}_{-49.0} $ & $85.0^{+6.8}_{-52.6.0} $ & \cmark $79.0^{+3.0}_{-45.0}$ & \cmark $79.0^{+5.0}_{-44.8}$ &\cmark $79.0^{+5.0}_{-44.8}$  \\ 
\textsc{TwoWall}    & $107.0^{+23.0}_{-0.0} $ & $199.0^{+8.0}_{-19.0}$  & $138.0^{+7.0}_{-2.0}$   & $178.0^{+0.0}_{-6.0}$ & $177.0^{+0.0}_{-7.0}$ & \cmark $120.0^{+19.0}_{-0.0} $ & $177.0^{+0.0}_{-6.0}$ & $177.0^{+0.0}_{-6.0}$ &  $170.0^{+12.2}_{-0.0}$  \\ 
\textsc{Forest}    & $90^{+14.4}_{-10.0}$  & $128.0^{+15.0}_{-16.4}$   & $115.0^{+12.0}_{-13.2}$ & $135.0^{+13.0}_{-16.0}$ & $116.0^{+13.2}_{-16.4}$ & \cmark $102.0^{+15.0}_{-12.0} $& $117.0^{+19.2}_{-17.0}$ & $115.0^{+20.0}_{-15.0}$  & $115.0^{+21.2}_{-13.4}$  \\ 
\textsc{Gate}    & $50.0^{+6.0}_{-9.0}$ & $74.0^{+8.0}_{-9.0}$   & $75.0^{+14.2}_{-6.2}$ & $60.0^{+8.0}_{-6.2}$ & $50.0^{+7.0}_{-8.2}$ & $53.0^{+7.0}_{-9.2} $ & $50.0^{+7.0}_{-7.2}$ & \cmark$48.0^{+10.0}_{-7.2}$ & \cmark$48.0^{+9.2}_{-9.2}$  \\ 
\textsc{Maze}    & $537.0^{+37.0}_{-24.6}$ & $668.5^{+40.3}_{-56.1}$   &  $613.0^{+39.6}_{-33}$ & $512^{+52.2}_{-34.0}$ & $516.5^{+33.70}_{-36.50}$ & $529.0^{+40.0}_{-37.2} $ & \cmark $502.5^{+58.7}_{-28.2}$ & $512.0^{+42.0}_{-31.0}$ & $554.0^{+52.2}_{-59.0}$  \\ 
\textsc{Baffle}   & $219.0^{+18.0}_{-12.0}$ & $244.0^{+14.6}_{-6.0}$   & $311.0^{+8.0}_{-15.6}$ & $232.0^{+6.0}_{-12.0}$ & $211.0^{+7.8}_{-6.0}$ & $230.0^{+18.0}_{-17.0} $ &\cmark $206.0^{+6.8}_{-3.0}$ &\cmark $205.0^{+6.0}_{-3.0}$ &$207.0^{+7.0}_{-2.0} $  \\ 
\textsc{Bugtrap}  & $77.0^{+12.0}_{-9.4}$  & $104.0^{+6.0}_{-14.0}$  & $112.5^{+16.9}_{-11.5}$ & $90.5^{+10.9}_{-13.5}$ & $75.5^{+16.5}_{-6.5}$ & $84.5^{+12.9}_{-10.5} $ & \cmark $75.0^{+15.4}_{-6.4}$ & \cmark $75.0^{+15.4}_{-6.4}$ & \cmark $75.0^{+15.4}_{-6.4}$  \\ 
\textsc{Blob}  & $72.0^{+12.0}_{-4.0}$ & $92.0^{+5.4}_{-3.4}$   & $109.0^{+5.0}_{-7.0}$ &\cmark $70.0^{+6.0}_{-6.0}$ &\cmark $70.0^{+6.0}_{-6.0}$ & $80.0^{+9.0}_{-3.0} $ & $72.0^{+5.8}_{8.0}$ &\cmark $70.0^{+6.0}_{-6.0}$  &\cmark $70.0^{+6.0}_{-6.0}$  \\ 
\multicolumn{10}{c}{ {\bf 7D Manipulation Planning} }   \\
\textsc{Clutter1}  & $35.5^{+1.5}_{-1.5}$  & $38.0^{+12.0}_{-10.0}$   & $44.0^{+14.2}_{-4.0}$ & $92.0^{+5.0}_{-0.0}$ & $88.0^{+9.6}_{-0.0}$ & \cmark $37.5^{+2.1}_{-1.5} $ & $95.5^{+17.7}_{-11.1}$ & $50.0^{+3.0}_{-6.0}$ & $45.0^{+2.0}_{-1.6}$   \\ 
\textsc{Clutter2}  & $34.0^{+1.0}_{-2.0}$  & \cmark $32.0^{+3.0}_{-4.0}$   & $41.0^{+2.0}_{-2.2}$ & $85.0^{+0.0}_{-3.0}$ & $84.0^{+0.0}_{-2.0}$ & $37.0^{+0.0}_{-5.0} $ & $104.0^{+6.2}_{-6.8}$ & $47.0^{+2.0}_{-11.2}$ & $ 44.0^{+5.0}_{-7.2}$ \\ 
\bottomrule
\end{tabulary}
\label{tab:benchmark_results}
\end{table*}
}

\begin{figure*}[!t]
\centering
    \includegraphics[width=\textwidth]{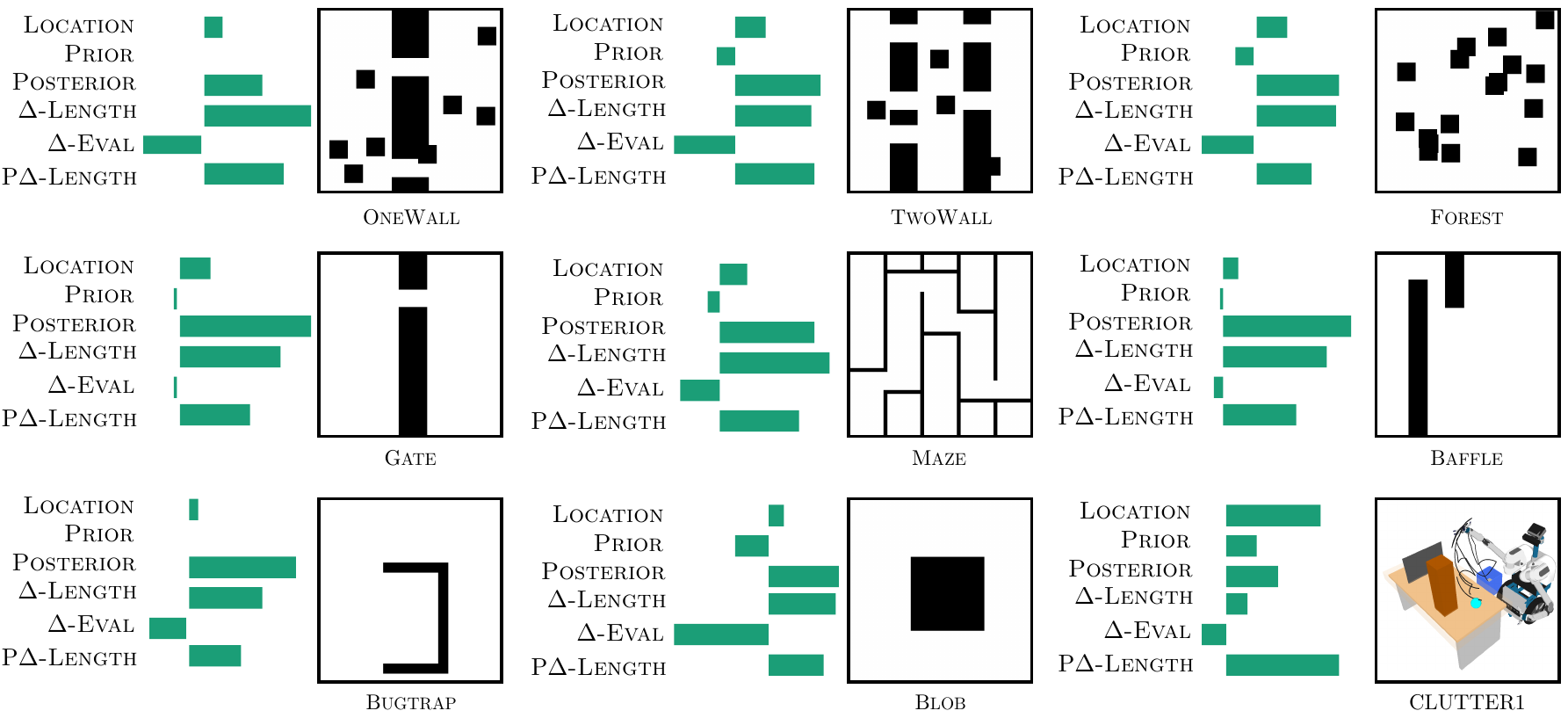}
\caption{ Weight bins depicting relative importance of each feature learned by the learner. \algName focuses on edges that are highly likely to be invalid and have high measure of centrality. \fullFigGap}
\label{fig:weights_learner} 
\end{figure*}

\subsection{Case Studies}
\begin{ques}
How does performance vary with training data?
\end{ques}
\figref{fig:stroller_plots}(a) shows the improvement in median validation reward with an increasing number of training iterations. Also, \figref{fig:stroller_plots}(b) shows that with more iterations, the learner visits diverse parts of the state-space on x-axis not visited by the oracle.  

\begin{ques}
How significant is the impact of heuristic roll-in on stabilizing learning in high-dimensional problems?
\end{ques}
\figref{fig:stroller_plots} shows a comparison of the median validation return per iteration using \algName versus \algHeuristic on \textsc{clutter1} dataset. Heuristic roll-in helps converge to a better policy in lesser number of iterations. Interestingly, the policy learned in the first iteration of \algHeuristic is significantly better than \algName, demonstrating the stabilizing effects of heuristic roll-in. 

\begin{ques}
How does performance compare to reinforcement learning with function approximation?
\end{ques}
\figref{fig:q_learn_plot}(a) shows training curves for \algName and \textsc{Q-Learning} with linear function approximation and experience replay. \algName is more sample efficient and converges to a competitive policy faster.

\begin{ques}
How does performance vary with train-test mismatch?
\end{ques}
\figref{fig:q_learn_plot}(b) shows a stress-test of a policy learned on \textsc{one wall} by running it on a validation set which is increasingly contaminated by environments from \textsc{forest}. The learned policy performs better than the best uninformed heuristic on \textsc{forest} for up to $60\%$ contamination.


\section{Related Work}
\label{sec:related_work}

In domains where edge evaluations are expensive and dominate planning time, a \emph{lazy approach} is often employed~\citep{bohlin2000path} wherein the graph is constructed \emph{without} testing if edges are collision-free. \lazysp~\citep{dellin2016unifying} extends the graph all the way to the goal, before evaluating edges. LWA*~\cite{cohen2015planning} extends the graph only a single step before evaluation. (LRA*)~\citep{Mandalika18} is able to trade-off between them by allowing the search to go to an arbitrary lookahead. The principle of laziness is reflected in similar techniques for randomized search~\citep{gammell2015batch,hauser2015lazy}. 

Several previous works investigated leveraging priors in search. FuzzyPRM~\citep{nielsen2000two} evaluates paths that minimize the probability of collision. The Anytime Edge Evaluation (AEE*) framework~\cite{narayanan2017heuristic} uses an anytime strategy for edge evaluation informed by priors. \bisect~\cite{choudhury2017active} and \direct~\cite{choudhury2018bayesian} casts search as Bayesian active learning to derive edge evaluation policies. However, none of these approaches formalized the problem of minimizing edge evaluation till the shortest feasible path is found. Our paper formalizes this problem, examines it in both the Bayesian setting as well as offers a practical learning based approach by leveraging imitation learning.

Efficient collision checking has its own history in the context of motion planning. Other approaches model belief over the configuration space to speed-up collision checking \citep{huh2016learning,choudhury2016pareto}, sample vertices in promising regions \citep{bialkowski2013free} or grow the search tree to explore the configuration space \citep{hsu1997path,burns2005sampling,lacevic2016burs}. However, these approaches make geometric assumptions and rely on domain knowledge. We work directly with graphs and are agnostic with respect to the domain.

Several recent works use imitation learning~\citep{ross2011reduction,ross2014reinforcement,sun2017deeply} to bootstrap reinforcement learning. THOR~\citep{sun2018truncated} performs a multi-step search to gain advantage over the reference policy. LOKI~\citep{cheng2018fast} switches from IL to RL. Imitation of clairvoyant oracles has been used in multiple domains like information gathering~\citep{choudhury2017data}, heuristic search~\citep{bhardwaj2017heuristic}, and MPC~\citep{kahn2016plato,tamar2016hindsight}.


\section{Discussion}
\label{sec:discussion}

We examined the problem of minimizing edge evaluations in lazy search on a distribution of worlds. We first formulated the problem of deciding which edge to evaluate as an MDP and
presented an algorithm to learn policies by imitating clairvoyant oracles, which, if the world is known, can optimally evaluate edges. Further, we provide a theoretical analysis of our proposed framework that details different sources of approximation error. We also analyze the problem in the Bayesian setting and draw a novel connection to Bayesian Active Learning.
However, the current approach has certain limitations. First, we only consider learning edge selector policies that are a linear combination of heuristic edge selectors. Although we found using baseline heuristics as features to yield strong results, the limited representational power of hand designed features can affect the performance of the policy. This can be improved in the future by considering more general function approximators such as Graph Neural Networks~\citep{wu2020comprehensive} to represent the selector policy. 
Second, while imitation learning of clairvoyant oracles is effective, the approach may be further improved through reinforcement learning~\citep{sun2018truncated,cheng2018fast} since in in practice we do not use the exact oracle but a sub-optimal approximation which means that errors in the oracle will transfer to the learner, limiting performance.

\bibliographystyle{unsrtnat}
\bibliography{references}

\begin{thebibliography}{39}
\providecommand{\natexlab}[1]{#1}
\providecommand{\url}[1]{\texttt{#1}}
\expandafter\ifx\csname urlstyle\endcsname\relax
  \providecommand{\doi}[1]{doi: #1}\else
  \providecommand{\doi}{doi: \begingroup \urlstyle{rm}\Url}\fi

\bibitem[Hauser(2015)]{hauser2015lazy}
Kris Hauser.
\newblock Lazy collision checking in asymptotically-optimal motion planning.
\newblock In \emph{ICRA}, 2015.

\bibitem[Dellin and Srinivasa(2016)]{dellin2016unifying}
Christopher~M Dellin and Siddhartha~S Srinivasa.
\newblock A unifying formalism for shortest path problems with expensive edge
  evaluations via lazy best-first search over paths with edge selectors.
\newblock In \emph{ICAPS}, 2016.

\bibitem[Bohlin and Kavraki(2000)]{bohlin2000path}
Robert Bohlin and Lydia~E Kavraki.
\newblock Path planning using lazy prm.
\newblock In \emph{ICRA}, 2000.

\bibitem[Schwartz and Sharir(1983)]{schwartz1983piano}
Jacob~T Schwartz and Micha Sharir.
\newblock On the “piano movers'” problem i. the case of a two-dimensional
  rigid polygonal body moving amidst polygonal barriers.
\newblock \emph{Communications on pure and applied mathematics}, 36\penalty0
  (3):\penalty0 345--398, 1983.

\bibitem[Choudhury et~al.(2017{\natexlab{a}})Choudhury, Javdani, Srinivasa, and
  Scherer]{choudhury2017active}
Sanjiban Choudhury, Shervin Javdani, Siddhartha Srinivasa, and Sebastian
  Scherer.
\newblock Near-optimal edge evaluation in explicit generalized binomial graphs.
\newblock In \emph{NIPS}, 2017{\natexlab{a}}.

\bibitem[Choudhury et~al.(2018)Choudhury, Srinivasa, and
  Scherer]{choudhury2018bayesian}
S.~Choudhury, S.S. Srinivasa, and S.~Scherer.
\newblock Bayesian active edge evaluation on expensive graphs.
\newblock In \emph{IJCAI}, 2018.

\bibitem[Choudhury et~al.(2017{\natexlab{b}})Choudhury, Bhardwaj, Arora,
  Kapoor, Ranade, Scherer, and Dey]{choudhury2017data}
Sanjiban Choudhury, Mohak Bhardwaj, Sankalp Arora, Ashish Kapoor, Gireeja
  Ranade, Sebastian Scherer, and Debadeepta Dey.
\newblock Data-driven planning via imitation learning.
\newblock \emph{IJRR}, 2017{\natexlab{b}}.

\bibitem[Sun et~al.(2017)Sun, Venkatraman, Gordon, Boots, and
  Bagnell]{sun2017deeply}
Wen Sun, Arun Venkatraman, Geoffrey~J Gordon, Byron Boots, and J~Andrew
  Bagnell.
\newblock Deeply aggrevated: Differentiable imitation learning for sequential
  prediction.
\newblock In \emph{International Conference on Machine Learning}, pages
  3309--3318, 2017.

\bibitem[Ross et~al.(2011)Ross, Gordon, and Bagnell]{ross2011reduction}
St{\'e}phane Ross, Geoffrey~J Gordon, and Drew Bagnell.
\newblock A reduction of imitation learning and structured prediction to
  no-regret online learning.
\newblock In \emph{AISTATS}, volume~1, page~6, 2011.

\bibitem[Bhardwaj et~al.(2019)Bhardwaj, Choudhury, Boots, and
  Srinivasa]{bhardwaj2019leveraging}
Mohak Bhardwaj, Sanjiban Choudhury, Byron Boots, and Siddhartha Srinivasa.
\newblock Leveraging experience in lazy search.
\newblock \emph{arXiv preprint arXiv:1907.07238}, 2019.

\bibitem[Haghtalab et~al.(2018)Haghtalab, Mackenzie, Procaccia, Salzman, and
  Srinivasa]{haghtalab2017provable}
Nika Haghtalab, Simon Mackenzie, Ariel~D. Procaccia, Oren Salzman, and
  Siddhartha~S. Srinivasa.
\newblock {The Provable Virtue of Laziness in Motion Planning}.
\newblock pages 106--113, 2018.
\newblock URL
  \url{https://aaai.org/ocs/index.php/ICAPS/ICAPS18/paper/view/17726}.

\bibitem[Watkins and Dayan(1992)]{watkins1992q}
Christopher~JCH Watkins and Peter Dayan.
\newblock Q-learning.
\newblock \emph{Machine learning}, 8\penalty0 (3-4):\penalty0 279--292, 1992.

\bibitem[Gordon(1995)]{gordon1995stable}
Geoffrey~J Gordon.
\newblock Stable function approximation in dynamic programming.
\newblock In \emph{Machine Learning Proceedings 1995}, pages 261--268.
  Elsevier, 1995.

\bibitem[Ross and Bagnell(2014)]{ross2014reinforcement}
Stephane Ross and J~Andrew Bagnell.
\newblock Reinforcement and imitation learning via interactive no-regret
  learning.
\newblock \emph{arXiv}, 2014.

\bibitem[Littman et~al.(1995)Littman, Cassandra, and
  Kaelbling]{Littman95learningpolicies}
Michael~L. Littman, Anthony~R. Cassandra, and Leslie~Pack Kaelbling.
\newblock Learning policies for partially observable environments: Scaling up.
\newblock In \emph{ICML}, 1995.

\bibitem[Javdani et~al.(2015)Javdani, Srinivasa, and
  Bagnell]{javdani2015shared}
Shervin Javdani, Siddhartha~S Srinivasa, and J~Andrew Bagnell.
\newblock Shared autonomy via hindsight optimization.
\newblock In \emph{RSS}, 2015.

\bibitem[Koval et~al.(2014)Koval, Pollard, and Srinivasa]{Koval-RSS-14}
Michael Koval, Nancy Pollard, and Siddhartha Srinivasa.
\newblock Pre- and post-contact policy decomposition for planar contact
  manipulation under uncertainty.
\newblock In \emph{RSS}, 2014.

\bibitem[Yoon et~al.(2007)Yoon, Fern, and Givan]{yoon2007ff}
Sung~Wook Yoon, Alan Fern, and Robert Givan.
\newblock Ff-replan: A baseline for probabilistic planning.
\newblock In \emph{ICAPS}, volume~7, pages 352--359, 2007.

\bibitem[Chakaravarthy et~al.(2007)Chakaravarthy, Pandit, Roy, Awasthi, and
  Mohania]{chakaravarthy2007decision}
Venkatesan~T Chakaravarthy, Vinayaka Pandit, Sambuddha Roy, Pranjal Awasthi,
  and Mukesh Mohania.
\newblock Decision trees for entity identification: Approximation algorithms
  and hardness results.
\newblock In \emph{Proceedings of the twenty-sixth ACM SIGMOD-SIGACT-SIGART
  symposium on Principles of database systems}, pages 53--62, 2007.

\bibitem[Golovin et~al.(2010)Golovin, Krause, and Ray]{golovin2010near}
Daniel Golovin, Andreas Krause, and Debajyoti Ray.
\newblock Near-optimal bayesian active learning with noisy observations.
\newblock In \emph{NIPS}, 2010.

\bibitem[Javdani et~al.(2014)Javdani, Chen, Karbasi, Krause, Bagnell, and
  Srinivasa]{javdani2014near}
Shervin Javdani, Yuxin Chen, Amin Karbasi, Andreas Krause, Drew Bagnell, and
  Siddhartha Srinivasa.
\newblock Near optimal bayesian active learning for decision making.
\newblock In \emph{AISTATS}, 2014.

\bibitem[Golovin and Krause(2011)]{golovin2011adaptive}
Daniel Golovin and Andreas Krause.
\newblock Adaptive submodularity: Theory and applications in active learning
  and stochastic optimization.
\newblock \emph{Journal of Artificial Intelligence Research}, 2011.

\bibitem[Cohen et~al.(2015)Cohen, Phillips, and Likhachev]{cohen2015planning}
Benjamin Cohen, Mike Phillips, and Maxim Likhachev.
\newblock Planning single-arm manipulations with n-arm robots.
\newblock In \emph{Eigth Annual Symposium on Combinatorial Search}, 2015.

\bibitem[Mandalika et~al.(2018)Mandalika, Salzman, and Srinivasa]{Mandalika18}
Aditya Mandalika, Oren Salzman, and Siddhartha Srinivasa.
\newblock {Lazy Receding Horizon A* for Efficient Path Planning in Graphs with
  Expensive-to-Evaluate Edges}.
\newblock pages 476--484, 2018.

\bibitem[Gammell et~al.(2015)Gammell, Srinivasa, and Barfoot]{gammell2015batch}
Jonathan~D. Gammell, Siddhartha~S. Srinivasa, and Timothy~D. Barfoot.
\newblock {Batch Informed Trees: Sampling-based optimal planning via
  heuristically guided search of random geometric graphs}.
\newblock In \emph{ICRA}, 2015.

\bibitem[Nielsen and Kavraki(2000)]{nielsen2000two}
Christian~L Nielsen and Lydia~E Kavraki.
\newblock A 2 level fuzzy prm for manipulation planning.
\newblock In \emph{IROS}, 2000.

\bibitem[Narayanan and Likhachev(2017)]{narayanan2017heuristic}
Venkatraman Narayanan and Maxim Likhachev.
\newblock Heuristic search on graphs with existence priors for
  expensive-to-evaluate edges.
\newblock In \emph{ICAPS}, 2017.

\bibitem[Huh and Lee(2016)]{huh2016learning}
Jinwook Huh and Daniel~D Lee.
\newblock Learning high-dimensional mixture models for fast collision detection
  in rapidly-exploring random trees.
\newblock In \emph{ICRA}, 2016.

\bibitem[Choudhury et~al.(2016)Choudhury, Dellin, and
  Srinivasa]{choudhury2016pareto}
Shushman Choudhury, Christopher~M Dellin, and Siddhartha~S Srinivasa.
\newblock Pareto-optimal search over configuration space beliefs for anytime
  motion planning.
\newblock In \emph{IROS}, 2016.

\bibitem[Bialkowski et~al.(2013)Bialkowski, Otte, and
  Frazzoli]{bialkowski2013free}
Joshua Bialkowski, Michael Otte, and Emilio Frazzoli.
\newblock Free-configuration biased sampling for motion planning.
\newblock In \emph{IROS}, 2013.

\bibitem[Hsu et~al.(1997)Hsu, Latombe, and Motwani]{hsu1997path}
David Hsu, J-C Latombe, and Rajeev Motwani.
\newblock Path planning in expansive configuration spaces.
\newblock In \emph{ICRA}, 1997.

\bibitem[Burns and Brock(2005)]{burns2005sampling}
Brendan Burns and Oliver Brock.
\newblock Sampling-based motion planning using predictive models.
\newblock In \emph{ICRA}, 2005.

\bibitem[Lacevic et~al.(2016)Lacevic, Osmankovic, and
  Ademovic]{lacevic2016burs}
Bakir Lacevic, Dinko Osmankovic, and Adnan Ademovic.
\newblock Burs of free c-space: a novel structure for path planning.
\newblock In \emph{ICRA}. IEEE, 2016.

\bibitem[Sun et~al.(2018)Sun, Bagnell, and Boots]{sun2018truncated}
Wen Sun, J~Andrew Bagnell, and Byron Boots.
\newblock Truncated horizon policy search: Combining reinforcement learning \&
  imitation learning.
\newblock \emph{arXiv preprint arXiv:1805.11240}, 2018.

\bibitem[Cheng et~al.(2018)Cheng, Yan, Wagener, and Boots]{cheng2018fast}
Ching-An Cheng, Xinyan Yan, Nolan Wagener, and Byron Boots.
\newblock Fast policy learning through imitation and reinforcement.
\newblock \emph{arXiv preprint arXiv:1805.10413}, 2018.

\bibitem[Bhardwaj et~al.(2017)Bhardwaj, Choudhury, and
  Scherer]{bhardwaj2017heuristic}
Mohak Bhardwaj, Sanjiban Choudhury, and Sebastian Scherer.
\newblock Learning heuristic search via imitation.
\newblock In \emph{CoRL}, 2017.

\bibitem[Kahn et~al.(2017)Kahn, Zhang, Levine, and Abbeel]{kahn2016plato}
Gregory Kahn, Tianhao Zhang, Sergey Levine, and Pieter Abbeel.
\newblock Plato: Policy learning using adaptive trajectory optimization.
\newblock In \emph{ICRA}, 2017.

\bibitem[Tamar et~al.(2016)Tamar, Thomas, Zhang, Levine, and
  Abbeel]{tamar2016hindsight}
Aviv Tamar, Garrett Thomas, Tianhao Zhang, Sergey Levine, and Pieter Abbeel.
\newblock Learning from the hindsight plan--episodic mpc improvement.
\newblock \emph{arXiv preprint arXiv:1609.09001}, 2016.

\bibitem[Wu et~al.(2020)Wu, Pan, Chen, Long, Zhang, and
  Philip]{wu2020comprehensive}
Zonghan Wu, Shirui Pan, Fengwen Chen, Guodong Long, Chengqi Zhang, and S~Yu
  Philip.
\newblock A comprehensive survey on graph neural networks.
\newblock \emph{IEEE transactions on neural networks and learning systems},
  2020.

\end{thebibliography}
\newpage

\end{document}